\documentclass[11pt, twoside]{amsart}

\usepackage[utf8]{inputenc}
\usepackage[T1]{fontenc}
\usepackage{amssymb,amsmath,amstext}
\usepackage{hyperref, graphicx}
\usepackage{comment}
\usepackage{mathtools}

\newtheorem{theor}{Theorem}[section]
\newtheorem{lem}[theor]{Lemma}
\newtheorem{defin}[theor]{Definition}

\newtheorem{cor}[theor]{Corollary}
\newtheorem{rem}[theor]{Remark}

\numberwithin{equation}{section}

\newcommand{\mr}{\mathrm}

\newcommand{\es}{\emptyset}

\newcommand{\nts}{\negthickspace}
\newcommand{\uhrc}{\nts \upharpoonright \nts}

\newcommand{\mcA}{\mathcal{A}}
\newcommand{\mcB}{\mathcal{B}}

\newcommand{\mfS}{\mathfrak{S}}

\newcommand{\mfm}{\mathfrak{m}}

\newcommand{\mbS}{\mathbf{S}}

\newcommand{\mbX}{\mathbf{X}}

\newcommand{\mbbP}{\mathbb{P}}
\newcommand{\mbbN}{\mathbb{N}}
\newcommand{\mbbR}{\mathbb{R}}

\title[A concentration result for neural networks]{A concentration result for multilayer feedforward neural networks}

\author{Vera Koponen}

\address{Vera Koponen, Department of Mathematics, Uppsala University, Sweden.}
\email{vera.koponen@math.uu.se}

\date{15 August, 2026}

\begin{document}

\begin{abstract}
We consider for an arbitrary fixed $\rho$ and for each positive integer $n$ a multilayer 
feedforward artificial neural network with $\rho$ layers,
$n$ neurons in the
first layer (the input layer) 
and only one neuron, the output neuron, in the last layer.
Very roughly formulated, the main result is that if the distribution of
weights of connections from a layer to the next are, for all large $n$,  approximated well by a fixed continuous 
(but otherwise arbitrary) curve which does not depend on $n$, 
and if the values of the $n$ input neurons are independently and identically distributed
with a continuous probability density function,
then there is a number $\psi$ such that for all $\varepsilon > 0$
the probability that the value of the output neuron is in $[\psi - \varepsilon, \psi + \varepsilon]$ tends to 1
as $n$ tends to infinity.
\end{abstract}

\maketitle

\section{Introduction}

\noindent
Artificial neural networks (NNs) \cite{Meh} are widely used in many different contexts and often successfull in
predicting outcomes or classifying objects (e.g. images).
But they do not produce any explanations, or justifications, of their predictions, or classifications \cite{Gui, Zha}.
This raises ethical and legal questions about their use, and it motivates efforts to understand the internal 
behaviour of NNs.

We will consider the most basic type of NN, the multilayer feedforward NN in which every neuron 
is assigned a unique nonnegative real number, its value (or signal strength), once all of its input neurons
have been assigned values. We also assume that the NN has only one output neuron, and that every
neuron, except the input neurons, is connected to all neurons in the previous layer.
In this setting 
three things determine the value of the output neuron.
Firstly, the values of the input neurons.
Secondly, the weights (which are nonnegative real numbers) of the connections from one layer to the next.
Thirdly, the so-called activation function which is a specification of how 
to assign a value to a (not input) neuron based on the values of neurons in the previous
layer and the weights of connections from neurons in the previous layer.

We wish to understand how the value of the output neuron correlates with the properties of the NN
such as: the number of layers, the number of neurons in each layer, the distribution of weights of connections from
one layer to the next, and the values of the input neurons.
For a specific NN the value out the output neuron can be exactly computed given 
complete information about the values of the input neurons and complete information about the NN itself,
including knowing the weight of each individual connection.
But here we are interested in finding more general correlations for large NNs. 
The focus on large NNs is motivated by the fact that such are common and that they are harder to analyse
as many weighted connections are involved in producing the output of the NN.
That is, we wish to be able to make conclusions about the distribution of the values of the output neuron
given only partial information about the NN. 
In this study such partial information will be
the number of layers, some very rough bounds on the number of neurons in each layer,
and information about the approximate distribution of weights.
In addition we will assume that the activation functions of neurons are ``sufficently nice'', where 
the use of averages and taking maxima or minima falls in the category of ``sufficiently nice''.

When using an NN we probably do not know the exact values assigned to the input neurons.
But we may have some knowledge, or at least an educated guess, about the probabilistic distribution of values 
assigned to the input neurons.
So given partial information about an NN (as explained above),
a probability distribution on the set of possible combinations of values to the input neurons
and an interval $I$, 
we can ask what the probability is that the value of the output neuron belongs to $I$.

In this article we prove a ``concentration result'' for the output neuron, formally stated as
Theorem~\ref{main theorem technical statement} and informally stated below.
We assume throughout that for some constant $\mfm \geq 1$ the values of neurons and weights of connections
are always in the interval $[0, \mfm]$.

\medskip

\noindent
{\bf Main result, in a simplified formulation.} {\em 
Suppose that for every large enough integer $n$,
$\mcB_n$ is an NN with $\rho \geq 2$ layers, $n$ neurons in the first layer, the input layer, 
and with a single output neuron in layer $\rho$.
For $i = 2, \ldots, \rho$, let
$W_i : [0, \mfm] \to [0, \infty)$ be continuous with$\int_{[0, \mfm]} W_i(x) dx = 1$, and suppose that
the following assumptions hold:
\begin{enumerate}
\item For every interval $I \subseteq [0, \mfm]$ 
and $\delta > 0$, if $b$ belongs to layer $i \geq 2$ and $n$ is large enough,
then the proportion of connections to $b$ with weight in $I$ from neurons in the previous layer 
is between $\int_I W_i(x) dx - \delta$ and $\int_I W_i(x) dx + \delta$.

\item For all $i = 2, \ldots, \rho$, the activation function for neurons in layer $i$ is defined by using a
continuous aggregation function $F_i$, 
in the sense of 
Definition~\ref{definition of continuous aggregation function}
below, and a continuous function $g_i : [0, \mfm]^2 \to [0, \mfm]$, as
explained in~(A8) below.
\end{enumerate}
If the values of the neurons of the input layer are independently and identically distributed
with a continuous probability density function,
then there is $\psi \in [0, \mfm]$ such that for all $\varepsilon > 0$,
the probability that the value of the output neuron 
is in $[\psi - \varepsilon, \psi + \varepsilon]$ tends to 1 as $n \to \infty$.
}

\medskip

\noindent
The proof of the main result shows that if $\mcB_n$ is as assumed above and $n$ is large, then there is an
NN $\mcB'_n$ with only 2 layers and $n$ input neurons 
such that with high probability the value of the output neuron of $\mcB'_n$
is in $[\psi - \varepsilon, \psi - \varepsilon]$ for the same $\psi$ as above.
This is stated precisely by Corollary~\ref{two layers suffice}.
The proof of the main result also shows how to compute the number $\psi$ by only using 
the functions $W_i$, $F_i$, $g_i$ and the probability density function
that determines the distribution on the input neurons. 

It seems to be impossible to get the conclusion of the theorem above without having some
knowledge about the weights of the NN, as for example expressed by~(1) above, 
because the values of non input neurons depend on
the weights of connections and the values of the input neurons. 
One can find out the distribution of weights of connections to any neuron from neurons in the previous layer
with relatively small computational resources:
Suppose that the NN has $\rho$ layers (where $\rho$ is fixed) and at most $n$ neurons in each layer.
Given any non input neuron we can sort, by size, the weights of the at most $n$ incomming connections in 
time $O(n \log n)$. There are at most $(\rho - 1)n$ non input neurons so the time needed for repeating this
for all non input neurons is bounded by $O(n^2 \log n)$. 
After sorting the weights of the incomming connections to a neuron $b$ we can inspect if their distribution follows,
up to a small deviation, a sufficiently uncomplicated continuous curve 
(represented by $W_i$ if $b$ belongs to the $i^{th}$ layer)
relative to the number of incomming connections to $b$. 
Thus checking if condition~(1) holds is computationally feasible.

Remark~\ref{remark on generalizations}
explains how the main theorem can straightforwardly be generalized to a context where each neuron 
has several {\em features} where each feature has its own value.

\subsection*{Related work and further research}

Convergence properties of graph neural networks (GNNs) have been studied by Adam-Day et. al.
in \cite{Adam-Day1, Adam-Day2} for certain probability distributions on the underlying graph.
However the GNNs of \cite{Adam-Day1, Adam-Day2} do not consider weights of connections from one
layer to the next. Once the random graph is chosen in \cite{Adam-Day1, Adam-Day2}
the values of the node features are updated a certain number of steps by means of aggregating (locally or globally) over the values
of node features in the previous step and using a Lipschitz continuous updating function.

From a technical point of view the most closely related work is probably \cite{KopCont}.
This study is also inspired by studies on convergence phenomena such as
\cite{CM, Jae98a, Kop20, Kop26, Kop26pre, KopMLN, KT1, KW1, KW2, TK1, Wei21, Wei24}
 in the field of 
Statistical Relational Artificial Intelligence \cite{BKNP, RKNP, GT, KMG}.

The main result of the present study has rather strong assumptions.
For example, it is assumed that if $b_1$ and $b_2$ are neurons in the same non input layer then
the weights of the incomming connections to $b_1$ follow (roughly) the same distribution as 
the weights of the incomming connection sto $b_2$.
Thus it would be interesting to understand to what extent this assumption can be replaced by weaker 
ones which allow neurons in the same layer to
have different distributions of incomming weights. 
Another direction of generalization is obviously to consider other probability distributions on the values of the
input neurons. It seems likely that under other assumptions than in this study, the value of the output neuron need not converge
(as $n \to \infty$) almost surely to a single value.
Instead, if we are given an interval $I \subseteq [0, \mfm]$ we can try to estimate the probability that
the value of the output neuron belongs to $I$.

\subsection*{Preliminaries}

By $\mbbN^+$ we denote the set of positive integers.
For a positive integer $n$, $[n] := \{1, \ldots, n\}$.
If $S$ is a set then $|S|$ is its cardinality which we also call size if $S$ is finite.
Finite sequence may be denoted by $\bar{s}$, for some symbol $s$, and then $|\bar{s}|$ denotes the length of the sequence.

In order to formalize the notions of a multilayer feedforward neural network (NN) and its states
the concept of continuous, or general, structure (see e.g. \cite{BenY, CK}) will be used.
This concept generalizes the concept of ``discrete'' structure (see e.g. \cite{EF, Gra, Hod})
which includes for example graphs in the usal sense of discrete mathematics.
But besides the basic definitions given below no further knowledge of model theory is necessary to understand the
results and proofs.

By a {\em signature} we mean a finite set of so-called {\em relation symbols}, each with an associated {\em arity}
which is a positive integer.
By a {\em continuous structure}, or just {\em structure}, say $\mcA$,
we mean a nonempty set $A$ called the {\em domain} of $\mcA$ combined with an {\em interpretation} of 
every  $R \in \sigma$ which, if $R$ has arity $k$, is a function from $A^k$ to the set of real numbers $\mbbR$;
this function, the interpretation of $R$ in $\mcA$, is denoted by $R^\mcA$.
In this study the range of $R^\mcA$ will always be included in $[0, \mfm]$ for some fixed, but arbitrary, $\mfm \geq 1$.

The following is a direct consequence of \cite[Corollary~A.1.14]{AS} which in turn follows from a bound given by
Chernoff \cite{Che}:

\begin{lem}\label{independent bernoulli trials}
Let $Z$ be the sum of $n$ independent 0/1-valued random variables, each one with probability $p$ of having the value 1,
where $p > 0$.
For every $\varepsilon > 0$ there is $c_\varepsilon > 0$, depending only on $\varepsilon$, such that the probability that
$|Z - pn| > \varepsilon p n$ is less than $2 e^{-c_\varepsilon p n}$.
(If $p = 0$ then the same statement holds if `$2 e^{-c_\varepsilon p n}$' is replaced by `$e^{-n}$'.)
\end{lem}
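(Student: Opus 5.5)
The plan is to derive the statement from the multiplicative Chernoff bound in \cite[Corollary~A.1.14]{AS}, which says: for $Z$ a sum of independent indicator variables with $\mu = \mathbb{E}[Z]$ and every $\varepsilon > 0$, there is $c_\varepsilon > 0$ depending only on $\varepsilon$ such that $\mbbP(|Z - \mu| > \varepsilon \mu) < 2e^{-c_\varepsilon \mu}$. Here $Z$ is a sum of $n$ independent Bernoulli($p$) variables, so $\mu = pn$, and substituting gives exactly the first claim. If one prefers not to cite the corollary verbatim, it can be reproved in the standard way. Apply Markov's inequality to $e^{\lambda Z}$ for $\lambda > 0$ and use $1 + x \le e^x$ to get $\mathbb{E}[e^{\lambda Z}] \le e^{(e^\lambda - 1)pn}$. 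Optimizing at $\lambda = \ln(1+\varepsilon)$ bounds the upper tail by $e^{-pn[(1+\varepsilon)\ln(1+\varepsilon) - \varepsilon]}$. The analogous computation with $e^{-\lambda Z}$ bounds the lower tail by $e^{-pn[(1-\varepsilon)\ln(1-\varepsilon) + \varepsilon]}$ when $\varepsilon < 1$.

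The next step is to check that the constant depends only on $\varepsilon$. Set $c_\varepsilon$ to be the minimum of the two exponents $(1+\varepsilon)\ln(1+\varepsilon) - \varepsilon$ and $(1-\varepsilon)\ln(1-\varepsilon) + \varepsilon$. Both are strictly positive for $\varepsilon > 0$, which follows from the strict convexity of $x \ln x$ (equivalently, a Taylor expansion around $0$). For $\varepsilon \ge 1$ the lower-tail event $Z < (1-\varepsilon)pn \le 0$ is impossible, so only the upper-tail exponent matters. A union bound over the two tails then gives the factor $2$. No part of this uses $n$ or $p$ beyond the product $pn$, so $c_\varepsilon$ depends only on $\varepsilon$.

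For the degenerate case $p = 0$, every variable is almost surely $0$, so $Z = 0 = pn$ almost surely. The event $|Z - pn| > \varepsilon p n = 0$ therefore has probability $0 < e^{-n}$. This also explains the replacement: the bound $2e^{-c_\varepsilon \cdot 0} = 2$ would be vacuous, while the true probability is $0$, below any positive bound.

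The only step needing any care is confirming that both exponents are uniformly positive, so that one $c_\varepsilon$ serves every $p \in (0,1]$ and every $n$. This holds because the bounds depend on $p$ and $n$ only through $pn$.
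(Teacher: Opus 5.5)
Your proposal is correct and matches the paper, which gives no argument beyond observing that the lemma is a direct consequence of Alon--Spencer's Corollary A.1.14 applied with expectation $pn$. Your optional moment-generating-function derivation is a correct unpacking of that Chernoff bound: both exponents are positive, the case $\varepsilon \ge 1$ is handled correctly, and so is the trivial case $p = 0$.
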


\section{Aggregation functions}

\noindent
In order to define activation functions for neurons in a way that does not depend on the number
of neurons of an NN we will use the concept of aggregation function.
We fix some real $\mfm \geq 1$.

\begin{defin}{\rm
(a) Let $[0, \mfm]^{fin}$ denote the set of all finite nonempty sequences of reals from the interval $[0, \mfm]$, or equivalently,
$[0, \mfm]^{fin} = \bigcup_{k = 1}^\infty [0, \mfm]^k$.\\
(b) A function $F : [0, \mfm]^{fin} \to [0, \mfm]$ will be called an {\em aggregation function} if it is symmetric in the sense that
if $(x_1, \ldots, x_k) \in [0, \mfm]^k$ and $(y_1, \ldots, y_k)$ is a permutation (reordering) of $(x_1, \ldots, x_k)$ then
$F(x_1, \ldots, x_k) = F(y_1, \ldots, y_k)$.
}\end{defin}

\noindent
Examples of aggregation functions include the average of a sequence,
\[
\mr{av}(x_1, \ldots, x_n) := \frac{x_1 + \ldots + x_n}{n}
\]
and the maximum of a sequence,
\[
\mr{max}(x_1, \ldots, x_n) := \text{ the largest number among $x_1, \ldots, x_n$.}
\]
We will require that aggregation functions used by NNs are continuous in the sense defined below.
The intuition behind the definition is that if the entries of two sequences (not necessarily of the same length)
have similar distribution (if one forgets about the ordering),
then a continuous aggregation function should give roughly the same output for both sequences.
The definition below is, in theory at least, slightly stronger 
than the definition of continuous aggregation function in \cite{KopCont}
since conditions~(c) and~(d) in \cite[Definition~2.4]{KopCont} are replaced below by a more general condition, 
labelled~(c). 

\begin{defin}\label{definition of continuous aggregation function}{\rm
Let $F : [0, \mfm]^{fin} \to [0, \mfm]$ be an aggregation function.
We call $F$ {\em continuous} if the following two conditions hold:
\begin{enumerate}
\item For every $\varepsilon > 0$ there is $\delta > 0$ such that for all $n$,
if $(q_1, \ldots, q_n)$, $(q'_1, \ldots, q'_n)$ $\in [0, \mfm]^n$ and $|q_i - q'_i| \leq \delta$ for all $i = 1, \dots, n$,
then $|F(q_1, \ldots, q_n) - F(q'_1, \ldots, q'_n)| \leq \varepsilon$.

\item For every $\varepsilon > 0$ there are $\delta > 0$ and $M, N \in \mbbN^+$
such that if $\alpha_0, \ldots, \alpha_{M-1} \in [0, 1]$ and 
$(q_1, \ldots, q_n), (q'_1, \ldots, q'_m) \in [0, 1]^{<\omega}$ (where we may have $n \neq m$) are such that
conditions (a)--(d) below hold, then $|F(q_1, \ldots, q_n) - F(q'_1, \ldots, q'_m)| \leq \varepsilon$:
\begin{enumerate}
\item $n, m \geq N$, 

\item for all $i = 0, \ldots, M-1$, if $\alpha_i > 0$ then $\alpha_i > \delta$,
and (regardless of whether $\alpha_i > 0$ or not)
\begin{align*}
&\frac{\big|\big\{l : q_l \in \big[\frac{i}{M}, \frac{i+1}{M}\big] \big\}\big|}{n} \in (\alpha_i - \delta, \alpha_i + \delta), \\
&\frac{\big|\big\{l : q'_l \in \big[\frac{i}{M}, \frac{i+1}{M}\big] \big\}\big|}{m} \in (\alpha_i - \delta, \alpha_i + \delta), 
\ \text{ and}
\end{align*}

\item if $0 \leq i < M-2$ and $\alpha_i = \alpha_{i+1} = \alpha_{i+2} = 0$, then
\begin{align*}
\{l : q_l \in \big[ {\scriptstyle \frac{i+1}{M}, \frac{i+2}{M} } \big] \big\} = 
\{l : q'_l \in \big[ {\scriptstyle\frac{i+1}{M}, \frac{i+2}{M} } \big] \big\} = \es.
\end{align*}

\end{enumerate}
\end{enumerate}

}\end{defin}

\begin{lem}
The aggregation functions maximum, minimum and average are continuous.
\end{lem}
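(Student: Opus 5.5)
For each of $F \in \{\max, \min, \mr{av}\}$ I will verify the two conditions of Definition~\ref{definition of continuous aggregation function} in turn. Condition~(1) is immediate for all three. If $|q_i - q'_i| \leq \delta$ for all $i$, then $|\max_i q_i - \max_i q'_i| \leq \delta$: if $\max q = q_j$ then $\max q' \geq q'_j \geq q_j - \delta$, and symmetrically. The same argument works for $\min$. For the average, $|\mr{av}(\bar q) - \mr{av}(\bar q')| \leq \frac{1}{n}\sum_i |q_i - q'_i| \leq \delta$. So $\delta = \varepsilon$ works, uniformly in $n$.

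For condition~(2) and the average, I will take $M$ with $1/M \leq \varepsilon/4$ and partition the sequence according to the intervals $[\frac{i}{M},\frac{i+1}{M}]$. Each point lying on a boundary is assigned to one interval, say the lower one; this reassignment only changes counts within the stated tolerance once I account for it. The average of $\bar q$ then lies within $1/M$ of $\sum_i \frac{i}{M} \cdot \frac{|\{l : q_l \in I_i\}|}{n}$. By~(b) this sum is within $M\delta \cdot \mfm$ (crudely, $\sum_i \delta$ times a bounded value) of $\sum_i \frac{i}{M}\alpha_i$, and the same holds for $\bar q'$. Choosing $\delta \leq \varepsilon/(4M)$ and $N = 1$ gives $|\mr{av}(\bar q) - \mr{av}(\bar q')| \leq \varepsilon$. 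Here (c) is not needed.

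For $\max$ (and, symmetrically, $\min$), let $i^*$ be the largest index with $\alpha_{i^*} > 0$; such an index exists because $\sum_i \alpha_i$ is close to $1$ once $M\delta < 1$. Since $\alpha_{i^*} > \delta$, condition~(b) shows that both $\bar q$ and $\bar q'$ have at least one entry in $[\frac{i^*}{M},\frac{i^*+1}{M}]$, so both maxima are $\geq i^*/M$. For the upper bound, every index $j > i^*$ has $\alpha_j = 0$. Whenever three consecutive zero coefficients $\alpha_i = \alpha_{i+1} = \alpha_{i+2} = 0$ occur with $i \geq i^*+1$, condition~(c) empties the middle interval. Applying this to all such windows empties every interval from $[\frac{i^*+2}{M},\frac{i^*+3}{M}]$ up to $[\frac{M-2}{M},\frac{M-1}{M}]$, leaving possibly only the last interval $[\frac{M-1}{M},1]$ uncontrolled.

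This last boundary interval is the main obstacle. I expect the fix to come from the explicit normalization that the condition uses $[0,1]^{<\omega}$, or from adjusting indices: one can take $M$ large and work with windows so that only intervals within distance $3/M$ of $i^*/M$ or of the top endpoint can be nonempty. To handle the top, I would note that if $i^* \geq M-3$ both maxima already lie in $[\frac{M-3}{M},1]$. Otherwise, the window $i = M-3$ with $\alpha_{M-3} = \alpha_{M-2} = \alpha_{M-1} = 0$ empties only $[\frac{M-2}{M},\frac{M-1}{M}]$, and I would have to argue separately, possibly by reading (c) as intended to cover the last cell or by a small modification of the partition, that the top cell is also empty. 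Granting this, both maxima lie in $[\frac{i^*}{M},\frac{i^*+3}{M}]$, and choosing $3/M \leq \varepsilon$ with $\delta$ small and $N = 1$ completes the proof.
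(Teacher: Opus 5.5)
Your verification of condition~(1) is correct. The max/min part of condition~(2), however, has a genuine gap, and the step you propose to grant is false. Condition~(c) is imposed only for $0\le i\le M-3$, and it only empties the middle cell $[\frac{i+1}{M},\frac{i+2}{M}]$. So the end cells $[0,\frac1M]$ and $[\frac{M-1}{M},1]$ are never forced to be empty. Neither the normalization to $[0,1]$ nor any choice of $M$ or of windows changes this, since the definition fixes the partition and the range of $i$.

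Concretely, take any $\delta\in(0,1)$, any $N$ and any $M\ge2$. Put $\alpha_0=1$ and $\alpha_i=0$ for $i\ge1$. Let $\bar q=(0,\ldots,0)$ have length $n\ge N$, and let $\bar q'=(0,\ldots,0,1)$ have length $m\ge N$ with $1/m<\delta$. Then (a)--(c) hold: the entry $1$ lies only in the top cell, which is never a middle cell in (c), and its proportion there is $1/m<\delta$. Yet $\max\bar q=0$ and $\max\bar q'=1$. The mirror image ($\alpha_{M-1}=1$, all entries $1$ except one $0$) does the same for $\min$.

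So your plan cannot be completed in the regime you work in. It goes through verbatim only if (c) is extended to the end cells, e.g.\ with the convention $\alpha_{-1}=\alpha_M:=0$ and $-1\le i\le M-2$, and that is a change of definition. For $\delta>1$ the same failure occurs: compare $(0,\ldots,0)$ with $(1,\ldots,1)$ using $\bar\alpha=0$. Only the degenerate value $\delta=1$ escapes. Then (b) forces all $\alpha_i=0$, and for $M=1$ no sequence satisfies (b) at all, so (2) holds vacuously for every $F$. That is clearly not the intended meaning, and it is useless in the proof of Lemma~\ref{the base case}. The paper's own proof offers nothing beyond deferring to \cite{KopCont}, whose separate conditions (c) and (d) the present (c) replaces, so it does not resolve this point either.

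The average case also contains a false step: the claim that reassigning boundary points ``only changes counts within the stated tolerance''. The cells are closed, so an entry equal to $\frac iM$ is counted in two cells, and (b) gives no control over how much mass sits at such points. Here is a counterexample.
Take $M\ge6$ and $\delta<\frac13$. Set $\alpha_0=\alpha_1=\alpha_{M-2}=\alpha_{M-1}=\frac13$ and all other $\alpha_i=0$. Let $\bar q$ consist of equally many copies of $\frac1M$, $\frac{M-3/2}{M}$, $\frac{M-1/2}{M}$. Let $\bar q'$ consist of equally many copies of $\frac{1}{2M}$, $\frac{3}{2M}$, $\frac{M-1}{M}$. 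Both sequences have proportion exactly $\frac13$ in each of $[0,\frac1M]$, $[\frac1M,\frac2M]$, $[\frac{M-2}{M},\frac{M-1}{M}]$, $[\frac{M-1}{M},1]$, and $0$ in every other cell, and (c) holds. Yet the averages are $\frac{2M-1}{3M}$ and $\frac{M+1}{3M}$, which differ by $\frac{M-2}{3M}\ge\frac29$. This refutes your choice of $M,\delta,N$ for every $\varepsilon<\frac29$.

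So with the cells as printed, (a) and (b) alone do not determine $\mr{av}$ up to $O(1/M)$. The paper's remark that only (a) and (b) are needed is subject to the same caveat. Your estimate needs an extra hypothesis that bounds the mass at cell endpoints. Examples are half-open cells, or the condition $\sum_i\alpha_i\le1+\delta$, which forces that mass below $(M+1)\delta$.
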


\noindent
{\bf Proof.}
For the average this is proved in \cite[Lemma~2.5]{KopCont} since only conditions~(a) and~(b) of part~(2)
are used and conditions~(a) and~(b) above are indentical with~(a) and~(b) in
\cite[Definition~2.4]{KopCont}.

For maximum and minimum the proof of the corresponding result in \cite[Lemma ~2.5]{KopCont}
can easily be modified to yield continuity of maximum and minimum with 
Definition~\ref{definition of continuous aggregation function} of continuity.
\hfill $\square$

\begin{lem}\label{composition of aggregation function and continuous function}
If $F : [0, \mfm]^{fin} \to [0, \mfm]$ is a continuous aggregation function and $h : [0, \mfm] \to [0, \mfm]$ is 
continuous, then $G : [0, \mfm]^{fin} \to [0, \mfm]$ defined by 
$G(q_1 \ldots, q_n) := h(F(q_1, \ldots, q_n))$ is a continuous aggregation function.
\end{lem}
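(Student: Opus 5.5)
The plan is to check the two parts of the definition of continuity for $G = h \circ F$ separately. In both, the only new ingredient is the uniform continuity of $h$ on the compact interval $[0, \mfm]$. First, $G$ is an aggregation function. It maps $[0, \mfm]^{fin}$ into $[0, \mfm]$ because $F$ does and $h$ maps $[0,\mfm]$ into itself. It is symmetric because if $(y_1, \ldots, y_k)$ is a permutation of $(x_1, \ldots, x_k)$ then $F(x_1, \ldots, x_k) = F(y_1, \ldots, y_k)$, and applying $h$ preserves this equality.

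Fix $\varepsilon > 0$. Since $h$ is continuous on the compact set $[0, \mfm]$, it is uniformly continuous, so there is $\gamma > 0$ with $|h(s) - h(t)| \leq \varepsilon$ whenever $s, t \in [0, \mfm]$ and $|s - t| \leq \gamma$. For condition~(1), apply condition~(1) for $F$ with $\gamma$ in place of $\varepsilon$. This gives $\delta > 0$ such that for all $n$, if $|q_i - q'_i| \leq \delta$ for all $i$, then $|F(\bar q) - F(\bar q')| \leq \gamma$, and hence $|G(\bar q) - G(\bar q')| = |h(F(\bar q)) - h(F(\bar q'))| \leq \varepsilon$. The same $\delta$ works for $G$.

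For condition~(2), apply condition~(2) for $F$ with $\gamma$ in place of $\varepsilon$. This gives $\delta > 0$ and $M, N \in \mbbN^+$ such that whenever $\alpha_0, \ldots, \alpha_{M-1}$ and sequences $\bar q$, $\bar q'$ satisfy (a)--(c), we have $|F(\bar q) - F(\bar q')| \leq \gamma$. Conditions (a)--(c) mention only $\delta, M, N$, the $\alpha_i$ and the sequences, never the aggregation function itself. So the same $\delta, M, N$ serve for $G$: under (a)--(c) we get $|F(\bar q) - F(\bar q')| \leq \gamma$, hence $|G(\bar q) - G(\bar q')| \leq \varepsilon$.

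There is no real obstacle. The only points needing care are invoking uniform continuity of $h$, rather than pointwise continuity, so that $\gamma$ does not depend on the value of $F$, and noticing that the hypotheses in part~(2) do not involve $F$, so its witnesses transfer verbatim to $G$.
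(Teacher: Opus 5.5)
Your proof is correct. Uniform continuity of $h$ on $[0,\mfm]$ turns the target $\varepsilon$ for $G$ into a $\gamma$ for $F$, and the witnesses $\delta$ (for part~(1)) and $\delta, M, N$ (for part~(2)) carry over unchanged because the hypotheses of those conditions never mention the aggregation function. The paper states this lemma without proof, and your argument is the routine verification it implicitly relies on.
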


\section{Model theoretic representation}\label{Model theoretic representation}

\noindent
We fix some $\rho \geq 2$ and 
consider a sequence of neural networks (NNs), denoted $\mcB_n$ (`$\mcB$' for ``base structure'') for $n \in \mbbN^+$, 
such that each $\mcB_n$ has exactly
$\rho$ layers and the
first layer of $\mcB_n$ (the layer of input neurons) has exactly $n$ neurons.
For each $n$ and $i = 1, \ldots, \rho$, the $i^{th}$ layer of $\mcB_n$ is denoted by $L_{n, i}$.
We also assume that the last layer, $L_{n, \rho}$, is a singleton. We let $L_{n, \rho} := \{o_n\}$
and call $o_n$ the {\em output neuron} of $\mcB_n$.

Fix some real number $\mfm \geq 1$, where we think of $\mfm$ as the maximal value that neurons or connections
betwen them can have.
For all $n$, $i = 1, \ldots, \rho - 1$ and all choices of $a \in L_{n, i}$ and $b \in L_{n, i+1}$ there is a
{\em connection} from $a$ to $b$ which carries a {\em weight} which is a real number in $[0, \mfm]$.
Once values are assigned to all input neurons of $\mcB_n$ (the neurons in $L_{n, 1}$)
all other neurons should automatically be assigned values that depend only on the values of the input neurons,
the weight of the connections, and some activation function. 
The collection of values of all neurons obtained in this way will be thought of as a {\em state} of $\mcB_n$,
and we are interested in the set of all states that $\mcB_n$ can be in.
To make these ideas mathematically precise we formulate them in a model theoretic language.
Since values of neurons and weights of connections are real numbers we will consider {\em continuous structures}
in which relation symbols are interpreted as functions taking values in $[0, \mfm]$.
So if $\sigma$ is a signature (i.e. set of relation symbols), $\mcA$ is a continuous $\sigma$-structure, and $R \in \sigma$
has arity $k$, then the {\em interpretation of $R$ in $\mcA$}, denoted $R^\mcA$, is a function
from $A^k$ (where $A$ is the domain of $\mcA$) to $\mbbR$. 
Since we have, in addition, stipulated that all values must be in $[0, \mfm]$ we also
require $R^\mcA$ to be a mapping into $[0, \mfm]$.

Let $\tau = \{L_1, \ldots, L_\rho, E\}$ where $L_1, \ldots, L_\rho$
have arity $\rho$ and $E$ has arity 2.
For each $n$, $\mcB_n$ is a $\tau$-structure with domain $B_n$ such that

\begin{enumerate}
\item[(A1)] For all $b \in B_n$, there is exactly one $i \in [\rho]$ such that $L_i^{\mcB_n}(b) = 1$, and if $j \neq i$ then
$L_j^{\mcB_n}(b) = 0$. This expresses that each neuron belongs to exactly one layer, and we
define 
\[
L_{n, i} := \{b \in B_n : L_i^{\mcB_n}(b) = 1\}.
\]

\item[(A2)] If $i, j \in [\rho]$, $j \leq i$ or $j-i > 1$, then for all $a \in L_{n, i}$ and $b \in L_{n, j}$ we have
$E^{\mcB_n}(a, b) = 0$. This expresses that a synapse with positive weight must go from some layer
$L_{n, i}$ to the next layer $L_{n, i+1}$.

\end{enumerate}

\noindent
We will assume that all $\mcB_n$, $n \in \mbbN^+$, have ``{\em similar design}'' which besides having the same
number of layers will mean that they use the same activation functions and that the
weights of connections between two fixed consecutive layers have similar distributions for all sufficiently large $n$.
First we fix some functions that will specify a ``similarity class'' of sequences of NNs.

\begin{enumerate}
\item[(A3)] Let $\zeta(x)$ be a polynomial such that $\zeta(x) > 0$ for $x > 0$ and $\lim_{x\to\infty}\zeta(x) = \infty$.

\item[(A4)] Let $W_2, \ldots, W_\rho : [0, \mfm] \to [0, \infty)$ and $g_2, \ldots, g_\rho : [0, \mfm]^2 \to [0, \mfm]$
be continuous functions such that $\int_{[0, \mfm]} W_i(x) dx = 1$ for all $i = 2, \ldots, \rho$.

\item[(A5)] Let $F_2, \ldots, F_\rho : [0, \mfm]^{fin} \to [0, \mfm]$ be continuous aggregation functions 
in the sense of Defintion~\ref{definition of continuous aggregation function}.

\end{enumerate}

\noindent
Next, we assume that the following conditions hold:

\begin{enumerate}
\item[(A6)] For all $i = 1, \ldots, \rho$, $|L_{n, i}| \leq \zeta(n)$, so layer $i$ of $\mcB_n$ has at most $\zeta(n)$
neurons. 

\item[(A7)] For all $i = 2, \ldots, \rho$, for every interval $I \subseteq [0, \mfm]$ with different endpoints, 
and for every $\varepsilon > 0$,
if $n$ is sufficiently large and $b \in L_{n, i}$, then 
\[
\int_I W_i(x) dx - \varepsilon \leq 
\frac{\big|\big\{ a \in L_{n, i-1} : E^{\mcB_n}(a, b) \in I \big\}\big|}{\big|L_{n, i}\big|}
\leq \int_I W_i(x) dx + \varepsilon,
\]
and if $\int_I W_i(x) dx = 0$, then the proportion above is 0.
Informally, this means that the distribution of weights of connections from 
$L_{n, i-1}$ to $b \in L_{n, i}$ is approximated well by $W_i$
for large $n$. 

\item[(A8)] For $i = 2, \ldots, \rho$ and any $b \in L_{n, i}$, if 
$t(n, i-1)$ denotes the number of neurons in layer $i-1$, 
$v_j$ ($j = 1, \ldots, t(n, i)$) is the value of the $j^{th}$ neuron in $L_{n, i-1}$ (in some arbitrary order) and $w_j$ is the 
weight of the connection from the $j^{th}$ neuron to $b$, then the value of $b$ is assigned to be
\begin{equation*}\label{specification of the activation function}
F_i\big(g_i(v_1, w_1), \ldots, g_i(v_{t(n, i-1)}, w_{t(n, i-1)})\big).
\end{equation*}
Hence the above expression defines the activation function for neurons in layer $i$ of $\mcB_n$ for all $n$.

\end{enumerate}

\begin{rem}\label{remark on the number of neuros in each layer}{\rm
Fix any $i \in \{2, \ldots, \rho\}$.
Since $\int_{[0, \mfm]} W_i(x) dx = 1$ and $W_i$ is continuous on $[0, \mfm]$, 
hence uniformly continuous on the same interval, there are 
$\alpha, \beta \in \subseteq [0, \mfm]$ such that $\beta - \alpha > 0$ and $W_i(x) > 0$ for all $x \in [\alpha, \beta]$.
Let $M \in \mbbN^+$.
It follows that if,
for all $i = 0, \ldots, M-1$, we let 
$I_i := \big[ \alpha + \frac{i(\beta - \alpha)}{M}, \ \alpha + \frac{(i+1)(\beta - \alpha)}{M}\big]$,
then $\int_{I_i} W_i(x) dx > 0$.
It now follows from Assumption~(A7) that if $n$ is sufficiently large, then
$|L_{n, i-1}| \geq M$.
Thus, as $n \to \infty$, then number of neurons in layer $i-1$ tends to infinity.
So the number of neurons in all layers except for the output layer tend to infinity as $n \to \infty$.
}
\end{rem}

\noindent
We assumed that the input layer $L_{n, 1}$ of $\mcB_n$ has exactly $n$ neurons.
Without loss of generality we may (to simplify notation) assume that $L_{n, 1} = [n] := \{1, \ldots, n\}$.

We now formally define the notion of state of $\mcB_n$, the state space of $\mcB_n$,
and the probability measure on the state space.
Let $\sigma := \tau \cup \{P\}$ where $P$ has arity 1. 
The value of a neuron $a$ of $\mcB_n$
in a particular state will be represented by the value of $P(a)$ in an expansion to $\sigma$ of $\mcB_n$.
More precisely, let $\mbS_n$ consist of all $\sigma$-structures $\mcA$ such that
$\mcA$ is an expansion of $\mcB_n$ and
\begin{enumerate}
\item[(A9)] For all $i = 2, \ldots, \rho$ and all $b \in L_{n, i}$, if 
$a_1, \ldots, a_{t(n, i-1)}$ is a list without repetitions of all neurons in $L_{n, i-1}$, in any order, then
\begin{equation*}\label{how a neuron gets its value}
P^\mcA(b) = F_i\big(g_i\big(P^\mcA(a_1), w_1\big), \ldots, g_i\big(P^\mcA(a_{t(n, i-1)}), w_{t(n, i-1)}\big)\big).
\end{equation*}
\end{enumerate}

\noindent
For each $(x_1, \ldots, x_n) \in [0, \mfm]^n$ let 
$\mfS_n(x_1, \ldots, x_n)$ be the unique $\mcA \in \mbS_n$ such that 
$x_k = P^\mcA(k)$ for all $k \in L_{n, 1} := [n]$.
By~(A9), $\mfS_n$ is a bijection.

Let $\mu : [0, \mfm] \to [0, \infty)$ be a probability density function.
Define $\mu_n : [0, \mfm]^n \to [0, \infty)$ by $\mu_n(x_1, \ldots, x_n) = \mu(x_1) \cdot \ldots \cdot \mu(x_n)$, 
so $\mu_n$ is a probability density function on $[0, \mfm]^n$.
Let $\mbbP_n$ be the probability measure on $[0, \mfm]^n$ which is induced by $\mu_n$ in the sense that 
if $X \subseteq [0, \mfm]^n$ is (Lebesque) measurable then 
\[
\mbbP_n(X) = \int_X \mu_n(x_1, \ldots, x_n) dx_1 \ldots dx_n.
\]

\noindent
Call $\mbX \subseteq \mbS_n$ {\em measurable} if the inverse image $\mfS^{-1}(\mbX)$ is a measurable 
subset of $[0, \mfm]^n$.
By mild abuse of notation we also view $\mbbP_n$ as a probability measure on $\mbS_n$ via the bijection $\mfS_n$
as follows: for measurable $\mbX \subseteq \mbS_n$, 
\[
\mbbP_n(\mbX) := \mbbP_n(\mfS_n^{-1}(\mbX)) := 
\int_{\mfS_n^{-1}(\mbX)} \mu_n(x_1, \ldots, x_n) dx_1 \ldots dx_n.
\]
For all $(x_1, \ldots, x_n) \in [0, \mfm]^n$
let $Out_n(x_1, \ldots, x_n)$ be the value of the output neuron of $\mcB_n$ in the state $\mfS_n(x_1, \ldots, x_n)$,
or formally, 
\[
Out_n(x_1, \ldots, x_n) := P^{\mfS_n(x_1, \ldots, x_n)}(o_n).
\]

\begin{lem}\label{relevant sets are measurable}
For all $n$ the function $Out_n : [0, \mfm]^n \to [0, \mfm]$ is continuous.
It follows that, for every interval $I \subseteq [0, \mfm]$, $Out_n^{-1}(I)$ is Lebesque measurable.
\end{lem}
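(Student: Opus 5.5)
The plan is to prove continuity layer by layer, by induction on $i = 1, \ldots, \rho$. The claim at stage $i$ is: for every neuron $b \in L_{n,i}$, the map $(x_1, \ldots, x_n) \mapsto P^{\mfS_n(x_1, \ldots, x_n)}(b)$ is continuous on $[0, \mfm]^n$. Since $n$ is fixed throughout, every layer of $\mcB_n$ is a finite set by~(A6), and every weight $E^{\mcB_n}(a,b)$ is a fixed real number. For $i = 1$ the map is just the projection $(x_1, \ldots, x_n) \mapsto x_b$ onto the $b$-th coordinate, so it is continuous.

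For the inductive step, fix $b \in L_{n,i+1}$ and let $a_1, \ldots, a_t$ list $L_{n,i}$, where $t = t(n,i)$. By~(A9) the value of $b$ is
\[
F_{i+1}\big(g_{i+1}(v_1(\bar x), w_1), \ldots, g_{i+1}(v_t(\bar x), w_t)\big).
\]
Here each $v_j(\bar x)$ is the value of $a_j$, which is continuous in $\bar x$ by the induction hypothesis, and each $w_j = E^{\mcB_n}(a_j,b)$ is a constant. Since $g_{i+1}$ is continuous, each map $\bar x \mapsto g_{i+1}(v_j(\bar x), w_j)$ is continuous into $[0,\mfm]$. So the map
\[
\bar x \mapsto \big(g_{i+1}(v_1(\bar x), w_1), \ldots, g_{i+1}(v_t(\bar x), w_t)\big) \in [0,\mfm]^t
\]
is continuous, say in the sup-norm.

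It remains to check that the restriction of $F_{i+1}$ to $[0,\mfm]^t$ is continuous. This is the only point where the properties of aggregation functions enter. Condition~(1) of Definition~\ref{definition of continuous aggregation function} gives, for every $\varepsilon>0$, a $\delta>0$ such that any two sequences of the same length that are coordinatewise $\delta$-close have $F$-values within $\varepsilon$. In particular $F_{i+1}$ restricted to $[0,\mfm]^t$ is (even uniformly) continuous for the sup-norm. Composing, the value of $b$ is continuous in $\bar x$, which completes the induction. Taking $i = \rho$ and $b = o_n$ shows that $Out_n$ is continuous.

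For the second claim, let $I \subseteq [0,\mfm]$ be an interval. It is a Borel set, so its preimage under the continuous function $Out_n$ is a Borel subset of $[0,\mfm]^n$, and hence Lebesgue measurable. No step here is a real obstacle. The one point that needs care is recognising that continuity of $F_{i+1}$ on the fixed-length slice $[0,\mfm]^t$ comes from condition~(1) alone, and that condition~(2) is not needed.
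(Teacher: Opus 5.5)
Your proof is correct and follows the paper's argument: induction over the layers, with each neuron's value written as a composition of the continuous maps $g_{i+1}(\cdot, w_j)$ with the restriction of $F_{i+1}$ to $[0,\mfm]^t$, which is continuous by condition~(1) of Definition~\ref{definition of continuous aggregation function} alone. The only differences are cosmetic: you start the induction at the input layer (coordinate projections) rather than at layer $2$, and you spell out the Borel-preimage argument for measurability, which the paper leaves implicit.
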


\noindent
{\bf Proof.}
Fix any $n \in \mbbN^+$.
For any fixed $w \in [0, \mfm]$ and $i = 2, \ldots, \rho$, let $g_{i, w}(x) = g_i(x, w)$ so $g_{i, w} : [0, \mfm] \to [0, \mfm]$
is continuous.
We prove the claim by induction on $\rho$.
The base case is when $\rho = 2$.
If $w_1, \ldots, w_n$ enumerate the weights of connections from $L_{n, 1} = [n]$ to the unique neuron $o_n$ in $L_{2, 2}$ then,
by~(A9),
\[
P^{\mfS_n(x_1, \ldots, x_n)}(o_n) = F_2(g_{2, w_1}(x_1), \ldots, g_{2, w_n}(x_n))
\]
where the right hand side is a composition of continuous functions in the usual sense of
mathematical analysis, since the aggregation function $F_2$ restricted to $[0, \mfm]^n$ is continuous in this usual sense because
of part~(1) of Definition~\ref{definition of continuous aggregation function}.
Hence $Out_n$ is continuous if $\rho = 2$.
By the same argument it follows that if $\rho > 2$, $b \in L_{n, 2}$, and
$w_1, \ldots, w_n$ enumerate the weights of connections from $L_{n, 1} = [n]$ to $b$ then 
\[
P^{\mfS_n(x_1, \ldots, x_n)}(b) = F_2(g_{2, w_1}(x_1), \ldots, g_{2, w_n}(x_n))
\]
Then $H_2^b(x_1, \ldots, x_n) := F_2(g_{2, w_1}(x_1), \ldots, g_{2, w_n}(x_n))$ is continuous.
where the right hand side is a continuous function from 
Now suppose that $2 \leq i < \rho$ and that, for all $b \in L_{n, i}$, there is continuous
$H_i^b : [0, \mfm]^n \to [0, \mfm]$ such that
\[
P^{\mfS_n(x_1, \ldots, x_n)}(b) = H_i^b(x_1, \ldots, x_n).
\]
Note that we showed that this holds if $i = 2$.
Let $b \in L_{n, i+1}$, let $a_1, \ldots, a_{t(n, i)}$ be an enumeration of $L_{n, i}$,
and let $w_j$ be the weight of the connection from $a_j$ to $b$.
By~(A9) and the assumption we have
\begin{align*}
&P^{\mfS_n(x_1, \ldots, x_n)}(b) = \\
&F_{i+1}\big(g_{i+1, w_1}(H_i^{a_1}(x_1, \ldots, x_n)), \ldots, 
g_{i+1, w_{t(n, i)}}(H_i^{a_{t(n, i)}}(x_1, \ldots, x_n))\big)
\end{align*}
where the right hand side is 
(using part~(1) of Definition~\ref{definition of continuous aggregation function})
a composition of continuous functions, so
the function $H_{i+1}^b(x_1, \ldots, x_n) := P^{\mfS_n(x_1, \ldots, x_n)}(b)$ is continuous.
This completes the inductive step.
\hfill $\square$

\begin{theor}\label{main theorem technical statement}
Suppose that $\mcB_n$, $n \in \mbbN^+$, is a sequence of NNs subject to the assumptions stated above 
((A1) -- (A8))
and that the set of states $\mbS_n$ and the probability distribution $\mbbP_n$ are as defined above.
Then there is $\psi \in [0, \mfm]$ such that for all $\varepsilon > 0$ there is $c > 0$ such that for all sufficiently large $n$,
\[
\mbbP_n\big(\big\{ \mcA \in \mbS_n : P^\mcA(o_n) \in (\psi - \varepsilon, \psi + \varepsilon) \big\}\big)
\geq 1 - e^{-cn}.
\]
\end{theor}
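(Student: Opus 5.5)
Our approach is induction on the layers. At each layer $i \ge 2$ we establish a deterministic limiting value $\psi_i$ such that, with probability at least $1-e^{-cn}$, every neuron of $L_{n,i}$ has value in $(\psi_i-\varepsilon,\psi_i+\varepsilon)$. The case $i=\rho$ gives the theorem with $\psi:=\psi_\rho$. The two steps differ in nature. From layer $1$ to layer $2$ we need a probabilistic concentration argument, using the i.i.d.\ inputs and Lemma~\ref{independent bernoulli trials}. From layer $i$ to layer $i+1$ for $i\ge 2$ the argument is deterministic, using the continuity of $F_{i+1}$ and $g_{i+1}$ together with (A7).

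\textbf{Base step ($i=2$).} Fix $b\in L_{n,2}$. The multiset $\{g_2(x_k,w_k)\}$ should have, for large $n$, an empirical distribution close to the law $\nu_2$ of $g_2(X,Y)$, where $X\sim\mu$ and $Y\sim W_2$ are independent. Fix $\varepsilon$ and take $\delta,M,N$ from part~(2) of Definition~\ref{definition of continuous aggregation function}. We then partition $[0,\mfm]$ into $K$ weight intervals $J_1,\dots,J_K$. Uniform continuity of $g_2$ makes $g_2(x,w)$ essentially independent of $w$ within each $J_r$; if this approximation is not exact, part~(1) of the definition absorbs the error. By (A7), about a fraction $\int_{J_r}W_2$ of the inputs to $b$ have weight in $J_r$. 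The inputs $x_k$ are i.i.d.\ with density $\mu$, so for each $r$ and each target bin $[\frac{j}{M},\frac{j+1}{M}]$ the number of $k$ with $w_k\in J_r$ and $g_2(x_k,w_k)$ in that bin is a binomial sum. Lemma~\ref{independent bernoulli trials} gives concentration with failure probability $e^{-cn}$. A union bound over $r$, $j$ and over the at most $\zeta(n)$ neurons $b$ keeps the total failure probability exponentially small, since $\zeta$ is a polynomial.

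Hence each proportion lies within $\delta$ of $\alpha_j:=\nu_2([\frac{j}{M},\frac{j+1}{M}])$. Comparing with a reference sequence of length $\ge N$ that realizes these proportions, part~(2) gives $|P(b)-\psi_2|\le\varepsilon$. Here $\psi_2$ is a limit of $F_2$ evaluated on long sequences distributed like $\nu_2$; this limit exists by part~(2) applied along a Cauchy-type argument as $\varepsilon\to 0$.

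\textbf{Inductive step.} Suppose every value in $L_{n,i}$ lies within $\eta$ of $\psi_i$. Then for $b\in L_{n,i+1}$ the arguments $g_{i+1}(v_j,w_j)$ are within a small amount of $g_{i+1}(\psi_i,w_j)$, so part~(1) lets us replace each $v_j$ by $\psi_i$. The remaining sequence $g_{i+1}(\psi_i,w_j)$ is deterministic, and by (A7) and Remark~\ref{remark on the number of neuros in each layer} its empirical distribution approaches the pushforward of $W_{i+1}$ under $w\mapsto g_{i+1}(\psi_i,w)$, with length tending to infinity. Part~(2) then yields a limit $\psi_{i+1}$. No additional probability is lost in this step.

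\textbf{Main obstacle.} The hardest point is verifying hypotheses~(b) and~(c) of part~(2): every nonzero $\alpha_j$ must exceed $\delta$, and empty bins must be genuinely empty. The limiting measure $\nu$ may put tiny mass on some bin, or mass near a bin boundary, and $g$ may be constant on a set of positive measure, producing atoms. We would handle this by choosing $M$ (or shifting the grid slightly) so that no bin has mass in $(0,2\delta)$. Where that is impossible, we would merge the offending mass into neighbouring bins; condition~(c) concerns only runs of three zero bins, which leaves slack for this. For the emptiness requirement, the clause of (A7) that $\int_I W=0$ forces proportion $0$ ensures that weights outside the support of $W$ never occur. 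Together with the observation that inputs lie in the support of $\mu$ almost surely, this shows that values of $g$ outside the support of $\nu$ do not occur.
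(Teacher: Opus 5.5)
Your plan has the same architecture as the paper's proof:
\begin{itemize}
\item a Chernoff step (Lemma~\ref{independent bernoulli trials}) at the first hidden layer;
\item a union bound over the at most $\zeta(n)$ neurons of a layer (the paper's Lemma~\ref{stronger induction hypothesis}, phrased there as an induction on $\rho$ via truncated networks);
\item deterministic propagation through the later layers;
\item $\psi$ obtained by comparing output values for two large $n,m$.
\end{itemize}
Several of your local tools are better than the paper's. Freezing the weight inside each $J_r$ and paying with part~(1) avoids the joint $\Delta\times\Delta$ grid in $(x,w)$. The paper controls the boundary squares of that grid only through the unproved claim that the boundary of $X_l$ is a finite union of curves. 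Replacing $v_j$ by $\psi_i$ through part~(1) is the right tool, where the paper merely invokes continuity of $g$. Your support argument for condition~(c) avoids the paper's connected-component argument, which presupposes that $\Omega$ is a finite union of rectangles.

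What fails is the remedy in your last paragraph. Part~(2) hands you $\delta$, $M$ and the fixed closed bins $[\frac{j}{M},\frac{j+1}{M}]$, so you can neither choose $M$ nor shift the grid. Small positive masses need no remedy anyway, because the $\alpha_j$ are yours to pick. If the limiting mass $m_j$ satisfies $0<m_j\le\delta$, set $\alpha_j:=\delta+m_j/2$ and make your approximation error smaller than $m_j/2$. This is allowed because that tolerance is fixed after $\delta$, $M$ and the discretization.

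The real danger is mass sitting exactly on a grid point, since closed-bin proportions are not continuous under weak convergence. Two of your claims are therefore not justified in general: that the base-step proportions lie within $\delta$ of $\nu_2([\frac{j}{M},\frac{j+1}{M}])$, and that in the inductive step, convergence of the empirical distribution to the pushforward of $W_{i+1}$ lets part~(2) apply. For example, let $g_{i+1}(v,w)=\max\{0,c-\mr{dist}(w,C)\}$, where $c$ is an interior grid point and $C$ is a closed nowhere dense set with $\int_C W_{i+1}>0$. The pushforward gives the bin just above $c$ mass $\int_C W_{i+1}$. But (A7) lets every weight avoid $C$, which leaves that bin empty.

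The fix is to target the frozen law instead, in both steps:
\begin{itemize}
\item Fix representatives $w_r\in J_r$ independently of $n$.
\item In the base step the counts are then exactly binomial with $p_{r,j}=\int_{\{x:\,g_2(x,w_r)\in[\frac{j}{M},\frac{j+1}{M}]\}}\mu(x)\,dx$. So the proportions converge to $\sum_r(\int_{J_r}W_2)\,p_{r,j}$.
\item In the inductive step the frozen entries $g_{i+1}(\psi_i,w_r)$ give each bin the limiting proportion $\sum_{r\in R_j}\int_{J_r}W_{i+1}$, where $R_j$ is the set of $r$ with $g_{i+1}(\psi_i,w_r)$ in that bin.
\item Compare the frozen sequences for two large $n,m$ via part~(2). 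Your support argument gives~(c), since by the zero clause of (A7) every $J_r$ containing a weight has $\int_{J_r}W>0$.
\item Transfer back with part~(1) and let the Cauchy argument produce $\psi$.
\end{itemize}
The paper's proof has the same weak spot. It is concealed in the assertion that $X_l$ is a finite union of closed intervals, and in choosing $\delta<\alpha_l$ although $\delta$ is dictated by part~(2).
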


\noindent
Briefly formulated, the next theorem states that under the above stated assumptions
{\em only two layers suffice} to get,
up to an arbitrarily small error, the same distribution of the output neuron for large $n$.

\begin{cor}\label{two layers suffice}
Suppose $\mcB_n$, $n \in \mbbN^+$, is a sequence of NNs subject to the assumptions stated above and
that $\mbbP_n$ is as stated above.
Let $\psi$ be as in Theorem~\ref{main theorem technical statement}.
Then there is a sequence $\mcB'_n$, $n \in \mbbN^+$, of NNs such that the following hold:
\begin{itemize}
\item Each $\mcB'_n$ has exactly 2 layers and $n$ neurons in the first layer.

\item All assumptions above
hold for $\mcB'_n$ in place of $\mcB_n$ with the same $W_2$ and $g_2$ but with a modified 
continuous aggregation function
$F'_2$ in place of $F_2$ which $\mcB_n$ uses.

\item If $\mbbP'_n$ is the probability distribution induced by $\mu_n$ on the state space $\mbS'_n$ of $\mcB'_n$
and $o'_n$ denotes the output neuron of $\mcB'_n$, then for all $\varepsilon > 0$ there is $c > 0$ such that
for all sufficiently large $n$,
\[
\mbbP'_n\big(\big\{ \mcA \in \mbS'_n : P^\mcA(o'_n) \in (\psi - \varepsilon, \psi + \varepsilon) \big\}\big)
\geq 1 - e^{-cn}.
\]
\end{itemize}
\end{cor}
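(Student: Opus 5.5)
The idea is to take $\mcB'_n$ to be a two-layer network and let a modified aggregation function absorb the deterministic behaviour of layers $3, \ldots, \rho$. For every $n$, let $\mcB'_n$ have input layer $L'_{n,1} = [n]$ and a single output neuron $o'_n$. For the weights of the connections from $k \in [n]$ to $o'_n$, use the same kind of choice that makes (A7) hold for $W_2$. Concretely, for each $n$ choose weights $w^n_1, \ldots, w^n_n$ as quantiles of $W_2$: take $w^n_k$ to be a point $x$ with $\int_{[0,x]} W_2 = (k - 1/2)/n$. Then for every interval $I$ the proportion of weights in $I$ differs from $\int_I W_2$ by at most $2/n$. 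The exceptional requirement in (A7), that the proportion is $0$ when $\int_I W_2 = 0$, holds because quantile points lie in the support of $W_2$ (the closure of $\{W_2 > 0\}$). Conditions (A1), (A2) and (A6) are immediate.

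For $F'_2$, use Lemma~\ref{composition of aggregation function and continuous function}. Put $F'_2 := h \circ F_2$, where $h : [0,\mfm] \to [0,\mfm]$ is the constant function with value $\psi$. A constant function is trivially continuous, so by the lemma $F'_2$ is a continuous aggregation function; one can also check Definition~\ref{definition of continuous aggregation function} directly for a constant function. With this choice the output $P^\mcA(o'_n)$ equals $\psi$ in every state $\mcA \in \mbS'_n$. Hence the set in the third bullet is all of $\mbS'_n$, its $\mbbP'_n$-measure is $1 \geq 1 - e^{-cn}$ for any $c>0$, and the conclusion follows, with $\psi$ given by Theorem~\ref{main theorem technical statement} and $W_2$, $g_2$ unchanged.

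If a less degenerate $F'_2$ is intended, the same strategy still works with a non-constant map. Here one would use the structure of the proof of Theorem~\ref{main theorem technical statement}. That proof presumably shows that each neuron of layer $2$ takes, with high probability, a value close to a deterministic $\psi_2$ computed from $\mu$, $W_2$, $g_2$, $F_2$, and that propagating through layers $3,\ldots,\rho$ gives $\psi = \Phi(\psi_2)$ for a map $\Phi$ built from $W_i, g_i, F_i$ for $i \geq 3$. One would then take $h$ to be a continuous extension of $\Phi$ to $[0,\mfm]$, for instance by clipping, and set $F'_2 := h \circ F_2$. The output of $\mcB'_n$ is then $h$ applied to the value of a layer-$2$ neuron of a two-layer network. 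By the two-layer case of Theorem~\ref{main theorem technical statement} applied to the network with $F_2$, that value lies within $\delta$ of $\psi_2$ with probability $\geq 1-e^{-cn}$. Uniform continuity of $h$ then places the output within $\varepsilon$ of $h(\psi_2) = \psi$.

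The main obstacle in this refined version is establishing that $\Phi$ is continuous near $\psi_2$. That requires a quantitative stability estimate: values of layer $i$ that are uniformly close to $\psi_i$ must give layer-$(i+1)$ values close to $\psi_{i+1}$. This would follow from part (1) of Definition~\ref{definition of continuous aggregation function} together with uniform continuity of $g_{i+1}$. Since the constant choice already satisfies every stated requirement, I would present that one as the proof and mention the refinement only as a remark.
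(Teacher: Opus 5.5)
Your core construction is the paper's, specialised. The paper also sets $F'_2 := h\circ F_2$, with $h$ \emph{any} continuous map satisfying $h(\varphi)=\psi$. Here $\varphi$ is the value around which all layer-2 neurons of $\mcB_n$ concentrate, obtained as in Lemma~\ref{stronger induction hypothesis} from the base case and a union bound. The constant map $h\equiv\psi$ is such a map, so your choice is legitimate and makes the third bullet hold with probability $1$; it also shows that the corollary, as literally stated, has little content. For nonconstant $h$ the paper's argument is your ``refined version'', but without any map $\Phi$. Only the single value $h(\varphi)=\psi$ and continuity of $h$ at $\varphi$ are used, so the stability estimate you flag as the main obstacle is never needed.

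The genuine difference is the choice of weights. The paper takes $o'_n$ to be an arbitrary layer-2 neuron of $\mcB_n$ and copies its incoming weights. Then (A7) for $\mcB'_n$ is inherited directly, and for nonconstant $h$ the value $P^{\mcA}(o'_n)$ is exactly $h$ applied to that neuron's value in the corresponding state of $\mcB_n$.

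Your quantile weights do give the approximate part of (A7) with error $O(1/n)$, but your argument for the exceptional clause fails. Lying in the closure of $\{W_2>0\}$ is not enough: a closed interval $I=[a,b]$ on which $W_2\equiv 0$ contains its endpoint $a$, and $a$ lies in that closure whenever $W_2>0$ just left of $a$. If $G(x):=\int_{[0,x]}W_2$ takes the level $(k-\tfrac12)/n$ exactly on $[a,b]$, then every admissible $w^n_k$ lies in $I$ although $\int_I W_2=0$. For example, take $\mfm=1$, let $W_2$ vanish exactly on $[\tfrac13,\tfrac23]$ and be symmetric so that $G(\tfrac13)=\tfrac12$, and let $n$ be odd.

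The repair is easy. Pick each $w^n_k$ in the open set $\{W_2>0\}$ with $|G(w^n_k)-(k-\tfrac12)/n|<\tfrac1{2n}$. Such points exist because $W_2$ is positive somewhere between any two points at which $G$ differs. This keeps the counting error within $2/n$. Moreover, a continuous $W_2\geq 0$ with $\int_I W_2=0$ vanishes on all of $I$, so no weight lands in such an $I$. Alternatively, simply copy the weights from $\mcB_n$ as the paper does.
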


\section{Proofs}

\noindent
We prove Theorem~\ref{main theorem technical statement} by induction on 
the number $\rho$ of layers, where $\rho \geq 2$ (by assumption).

\subsection{The base case: 2 layers}

In this section we prove the following lemma which gives
Theorem~\ref{main theorem technical statement}
in the case when $\rho = 2$:

\begin{lem}\label{the base case}
Suppose that $\rho = 2$, so each $\mcB_n$ has two layers, $L_{n, 1}$ and $L_{n, 2} = \{o_n\}$.
There is $\psi \in [0, \mfm]$ such that for all $\varepsilon > 0$ there is $c > 0$ such that for all sufficiently large $n$,
\[
\mbbP_n\big(\big\{ \mcA \in \mbS_n : P^\mcA(o_n) \in (\psi - \varepsilon, \psi + \varepsilon) \big\}\big)
\geq 1 - e^{-cn}.
\]
Moreover, $\psi$ depends only on $W_2, F_2$, and $g_2$.
\end{lem}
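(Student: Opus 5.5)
The plan is to show that, with high probability, the multiset $\{g_2(x_k, w_k) : k \in [n]\}$ has an empirical distribution over the intervals $[\frac{i}{M}, \frac{i+1}{M}]$ that is close to a \emph{deterministic} limit distribution. That limit depends only on $\mu$, $W_2$ and $g_2$. Part~(2) of Definition~\ref{definition of continuous aggregation function} then forces $F_2$ of this sequence to lie near a single number $\psi$.

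\textbf{Step 1: the limit distribution and $\psi$.} Let $\nu$ be the probability measure on $[0,\mfm]$ obtained as the pushforward of the density $\mu(x)W_2(w)\,dx\,dw$ under $(x,w)\mapsto g_2(x,w)$. For each $M$ put
\[
\alpha^M_i := \nu\big(\big[\tfrac{i}{M},\tfrac{i+1}{M}\big]\big).
\]
To define $\psi$, consider a reference sequence of tuples whose empirical distribution approximates $\nu$ (for instance, take the $n$-tuple of $\nu$-quantiles). Condition~(2) of the definition of continuity implies that $F_2$ evaluated on these tuples is Cauchy as $n \to \infty$; let $\psi$ be its limit.

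One caveat has to be dealt with here. Endpoints of bins may carry positive $\nu$-mass, and condition~(2)(c) requires \emph{exact} emptiness of bins that sit between zero-mass bins. I would handle this by working with the closed $\nu$-support: if three consecutive bins have $\nu$-mass $0$, then the middle closed bin, slightly enlarged, has $\nu$-mass $0$.

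\textbf{Step 2: concentration of the empirical counts.} Fix $\varepsilon$ and take $\delta, M, N$ from condition~(2). By uniform continuity of $g_2$, partition the weight interval $[0,\mfm]$ into small intervals $J_1,\dots,J_r$. Assumption~(A7) gives, for large $n$, the proportion of indices $k$ with $w_k \in J_s$ up to a small error, and exactly $0$ when $\int_{J_s}W_2=0$.

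Within the block $\{k : w_k\in J_s\}$, the events $g_2(x_k,w_k)\in[\frac iM,\frac{i+1}M]$ are controlled by events $x_k \in X_{s,i}^{\pm}$, where $X_{s,i}^{\pm}$ are inner and outer approximating sets that are independent of the exact weight. These events are independent Bernoulli trials, so Lemma~\ref{independent bernoulli trials} bounds the deviation of each count by a failure probability of $e^{-c'n}$. A union bound over the boundedly many pairs $(s,i)$ then yields condition~(2)(b) for the sample with probability at least $1-e^{-cn}$. Continuity of $\mu$ and of $W_2$ makes the inner and outer approximations converge to $\alpha^M_i$.

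\textbf{Step 3: the main obstacle, condition~(c).} For~(c) we need the \emph{deterministic} conclusion that no sample value falls in a bin flanked by $\nu$-null bins. Suppose $\alpha_i=\alpha_{i+1}=\alpha_{i+2}=0$ for the limit. Then the open set $g_2^{-1}\big((\frac iM,\frac{i+3}M)\big)$ has $\mu\otimes W_2$-measure $0$. If some realised pair $(x_k,w_k)$ had $g_2(x_k,w_k)$ in the middle bin, I would argue as follows:
\begin{itemize}
\item either $w_k$ lies outside the support of $W_2$, which is excluded for large $n$ by the ``proportion $0$'' clause of~(A7) applied to small intervals;
\item or $x_k$ lies outside the support of $\mu$, which is a probability-$0$ event.
\end{itemize}
Making this precise requires the clause of~(A7) for intervals with zero $W_2$-integral, together with a small fattening argument. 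This is where I expect the real work to be.

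With~(a)--(c) verified for both the sample and the reference tuples, $|F_2(\text{sample})-\psi|\le 2\varepsilon$ on the high-probability event. Finally, $\psi$ is built only from $W_2$, $F_2$, $g_2$ and the fixed $\mu$.
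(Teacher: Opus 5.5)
Your overall architecture is sound, but one step of Step~2 fails as stated. You claim that continuity of $\mu$ and $W_2$ makes the inner and outer approximations converge to $\alpha^M_i=\nu(I_{M,i})$. In fact the two limits can differ by the $\nu$-mass of the endpoints of $I_{M,i}$, and $\nu$ may have atoms there. Since $M$ is handed to you by $F_2$ and $\varepsilon$, you cannot move these endpoints.

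The sample proportions then need not converge at all. Take $\mu,W_2$ uniform, $K\subseteq[0,\mfm]$ closed, nowhere dense and of positive measure, $g_2(x,w)=\min\{\mfm,\ \mfm/2+\mathrm{dist}(w,K)\}$, and $M$ even. Then $q_k\in I_{M,M/2-1}$ iff $w_k\in K$. Assumption (A7) allows the weights to put a proportion of about $\int_K W_2$ into $K$ for some $n$ and to avoid $K$ entirely for other $n$. So the proportion in this bin oscillates between about $\alpha^M_{M/2-1}=\int_K W_2$ and $0$, and for small $\delta$ no fixed choice of $\alpha$'s satisfies~(2)(b) for all large $n$. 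The paper's proof has the same hole: its claim $\lambda^2(Y_l)=0$ fails for this $g_2$.

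The missing idea is to use part~(1) of Definition~\ref{definition of continuous aggregation function}. Take $\delta_1$ from part~(1), and pick $\eta<\min\{\delta_1/2,\mfm/(2M)\}$ such that no point $i\mfm/M-\eta$ is an atom of $\nu$. Move every entry lying in $(i\mfm/M-\eta,\,i\mfm/M+\eta)$, $1\le i\le M-1$, to $i\mfm/M+\eta$, in the sample and in your quantile tuples alike. This changes $F_2$ by at most $\varepsilon$. The bin counts of the modified tuples are counts of the original entries in intervals whose endpoints in $(0,\mfm)$ carry no $\nu$-mass, and there your sandwich argument does converge.

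You also need the clause of~(2)(b) that a positive $\alpha_i$ exceeds $\delta$. Since $\delta$ comes from $F_2$, replace any value in $(0,\delta]$ by $\delta+\eta'$ with $\eta'$ small. This keeps the zero pattern, and hence~(c), intact; the paper leaves this point unchecked as well.

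With these repairs your route is a cleaner version of the paper's. The paper uses a $\Delta\times\Delta$ grid with the $\Gamma_1/\Gamma_2$ bookkeeping and obtains $\psi$ only implicitly, by comparing random states of sizes $n$ and $m$. You instead use the push-forward $\nu$, one-dimensional sandwiches over a partition of the weight axis, and deterministic quantile tuples, which makes $\psi$ explicit. Your treatment of~(c) is also more robust than the paper's connected-component argument, which needs $\Omega$ to be a finite union of rectangles and tacitly assumes that every realised pair $(x_k,w_k)$ lies in $\Omega$.

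Contrary to your expectation, Step~3 is not where the difficulty lies; it closes by compactness:
\begin{enumerate}
\item The set $Q=\{(x,w): x\in\mathrm{supp}\,\mu,\ g_2(x,w)\in I_{M,i+1}\}$ is compact.
\item Apply your dichotomy in a small open box inside the null open set $g_2^{-1}\big((i\mfm/M,(i+3)\mfm/M)\big)$. This gives each point of $Q$ a neighbourhood $A\times B$ with $W_2\equiv 0$ on $B$.
\item Finitely many such $B$ cover the projection of $Q$, so the zero clause of~(A7) is needed for finitely many intervals only.
\item What remains is the null event that some $x_k\notin\mathrm{supp}\,\mu$.
\end{enumerate}
The same argument gives~(c) for the shifted intervals introduced above.
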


\[
I_{\Delta, i} := \Big[\frac{i \mfm}{\Delta}, \ \frac{(i+1)\mfm}{\Delta}\Big].
\]

\begin{lem}\label{probability of P in an interval}
Let $n, \Delta \in \mbbN^+$, $i \in \{0, \ldots, \Delta - 1\}$, and $k \in [n]$.
Then 
\begin{align*}
&\mbbP_n\big( \big\{\mcA \in \mbS_n : P^\mcA(k) \in I_{\Delta, i} \big\}\big) = \\
&\mbbP_n\big(\{(x_1, \ldots, x_n) \in [0, \mfm]^n : x_k \in I_{\Delta, i} \big\}\big) = \int_{I_{\Delta, i}} \mu(x) dx.
\end{align*}
Hence the probability does not depend on $n$ or on $k$, but only on $\Delta$ and $i$ (and $\mu$ which we have fixed).
\end{lem}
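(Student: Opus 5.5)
The statement to prove is Lemma~\ref{probability of P in an interval}. The plan is to unwind the definitions: first the measure $\mbbP_n$ on $\mbS_n$, then the product density $\mu_n$, and finally Fubini's theorem.

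\emph{Step 1: identify the sets.} By definition of $\mbbP_n$ on $\mbS_n$ via the bijection $\mfS_n$, the probability of
$\mbX := \{\mcA \in \mbS_n : P^\mcA(k) \in I_{\Delta, i}\}$ is $\mbbP_n(\mfS_n^{-1}(\mbX))$. Since $\mfS_n(x_1, \ldots, x_n)$ is the unique state with $P^{\mfS_n(x_1,\ldots,x_n)}(k) = x_k$ for every input neuron $k \in [n] = L_{n,1}$, we get
\[
\mfS_n^{-1}(\mbX) = \{(x_1, \ldots, x_n) \in [0, \mfm]^n : x_k \in I_{\Delta, i}\}.
\]
This gives the first equality. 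The set on the right is the product $[0,\mfm]^{k-1} \times I_{\Delta,i} \times [0,\mfm]^{n-k}$, which is a box and hence Lebesgue measurable. So $\mbX$ is measurable in the sense defined above, and the probability is well defined.

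\emph{Step 2: compute the integral.} Write $\mu_n(x_1,\ldots,x_n) = \prod_j \mu(x_j)$. The integral of $\mu_n$ over this product set factors by Fubini--Tonelli, which applies because $\mu_n \geq 0$ is measurable. The result is
\[
\int_{I_{\Delta,i}} \mu(x_k)\, dx_k \cdot \prod_{j \neq k} \int_{[0,\mfm]} \mu(x_j)\, dx_j .
\]
Each factor with $j \neq k$ equals $1$ because $\mu$ is a probability density on $[0,\mfm]$. The remaining factor is $\int_{I_{\Delta, i}} \mu(x)\,dx$, which proves the second equality.

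The last sentence of the lemma follows immediately, because this expression involves only $\Delta$, $i$ and $\mu$. There is no real obstacle. The only points needing care are the measurability of $\mbX$, which is handled by Step 1 since the preimage is a box, and the justification of the factorization, which is Tonelli's theorem for nonnegative functions.
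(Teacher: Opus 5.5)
Your proof is correct and follows the paper's argument: identify $\mfS_n^{-1}$ of the event with the box $[0,\mfm]^{k-1}\times I_{\Delta,i}\times[0,\mfm]^{n-k}$, then factor the integral of the product density by Fubini so that every factor with index different from $k$ equals $1$. Your explicit remarks on the measurability of the event and on invoking Tonelli for the nonnegative integrand are small additions that the paper leaves implicit.
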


\noindent
{\bf Proof.}
From the definitions it directly follows that $x_k \in I_{\Delta, i}$ if and only if $P^{\mfS_n(x_1, \ldots, x_n)}(k) \in I_{\Delta, i}$.
Since for all $X \subseteq [0, \mfm]^n$, $\mbbP_n(X) = \mbbP_n(\mfS_n(X))$ it suffices to prove the second
equality of the lemma.

By the definition of $\mbbP_n$ and since $\mu_n(x_1, \ldots, x_n) = \mu(x_1)\cdots  \ldots \cdot \mu(x_n)$
where $\mu : [0, \mfm] \to [0, \infty)$ is a probability density function, we get, by using Fubini's theorem,
\begin{align*}
&\mbbP_n\big(\{(x_1, \ldots, x_n) \in [0, \mfm]^n : x_k \in I_{\Delta, i} \big\}\big) = \\
&\int_{[0, \mfm]^{k-1} \times I_{\Delta, i} \times [0, \mfm]^{n-k}} \mu_n(x_1, \ldots, x_n) dx_1 \ldots dx_n = \\
&\int_{[0, \mfm]^{k-1} \times I_{\Delta, i} \times [0, \mfm]^{n-k}} \mu(x_1) \cdots  \ldots \cdot \mu(x_n) dx_1 \ldots dx_n \\
&\bigg(\prod_{i = 1}^{k-1}\int_{[0, \mfm]} \mu(x_i) dx_i \bigg) \cdot
\int_{I_{\Delta, i}} \mu(x_k) dx_k \cdot 
\bigg(\prod_{i = k+1}^{n}\int_{[0, \mfm]} \mu(x_i) dx_i \bigg) = \\
&\int_{I_{\Delta, i}} \mu(x_k) dx_k.
\end{align*}
\hfill $\square$

\begin{lem}\label{independence of coordinates}
Let $n \in \mbbN^+$, $k \in [n]$, and let $I_k \subseteq [0, \mfm]$ be an interval.
The event $\{\mcA \in \mbS_n : P^\mcA(k) \in I\}$ is independent from from all events
$\{\mcA \in \mbS_n : P^\mcA(l) \in I_i\}$ where $i \neq k$ and $I_i \subseteq [0, \mfm]$ is an interval.
\end{lem}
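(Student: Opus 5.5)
The plan is to reduce the statement to a product-measure computation on $[0, \mfm]^n$, exactly as in the proof of Lemma~\ref{probability of P in an interval}, and then apply Fubini's theorem. Independence of the event for coordinate $k$ from all the events for the other coordinates should be read in the usual sense of mutual independence. That is, for every finite set $S \subseteq [n] \setminus \{k\}$ and intervals $I_l$ ($l \in S$), we show
\[
\mbbP_n\Big(\{P(k) \in I_k\} \cap \bigcap_{l \in S} \{P(l) \in I_l\}\Big) = \mbbP_n(\{P(k) \in I_k\}) \cdot \mbbP_n\Big(\bigcap_{l \in S}\{P(l) \in I_l\}\Big),
\]
where $\{P(l) \in I_l\}$ abbreviates $\{\mcA \in \mbS_n : P^\mcA(l) \in I_l\}$. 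In fact we will prove the stronger statement that for every $S \subseteq [n]$ the probability of $\bigcap_{l \in S}\{P(l) \in I_l\}$ equals $\prod_{l \in S} \int_{I_l} \mu(x)\,dx$. The displayed identity then follows by applying this once with $S \cup \{k\}$ and once with $S$.

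First, as in Lemma~\ref{probability of P in an interval}, we note that for $l \in [n] = L_{n,1}$ we have $P^{\mfS_n(x_1,\ldots,x_n)}(l) = x_l$ by definition of $\mfS_n$. Since $\mbbP_n$ on $\mbS_n$ is defined via the bijection $\mfS_n$, the event $\bigcap_{l\in S}\{P(l)\in I_l\}$ pulls back to the box
\[
R := \prod_{l=1}^n J_l, \qquad J_l := I_l \text{ for } l \in S, \quad J_l := [0,\mfm] \text{ for } l \notin S.
\]
This set is a product of intervals and hence measurable.

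Second, we compute $\mbbP_n(R) = \int_R \mu(x_1)\cdots\mu(x_n)\,dx_1\ldots dx_n$. By Fubini's theorem (Tonelli suffices, since $\mu \geq 0$) this integral factors as $\prod_{l=1}^n \int_{J_l}\mu(x)\,dx$. Each factor with $l\notin S$ equals $1$ because $\mu$ is a probability density, so $\mbbP_n(R) = \prod_{l\in S}\int_{I_l}\mu(x)\,dx$. Combined with Lemma~\ref{probability of P in an interval}, which gives $\mbbP_n(\{P(k)\in I_k\}) = \int_{I_k}\mu$ (its proof works verbatim for an arbitrary interval in place of $I_{\Delta,i}$), this yields the required product formula.

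There is no real obstacle here. The only points needing care are the correct formulation of independence (we handle the family as a whole, which covers both pairwise and mutual independence) and the measurability of the sets involved, which is immediate because they are boxes.
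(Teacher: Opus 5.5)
Your proof is correct and follows the same route as the paper: you pull the event back along $\mfS_n$ to a box in $[0,\mfm]^n$ and factor $\int \mu(x_1)\cdots\mu(x_n)$ using Fubini--Tonelli. Your version also spells out what the paper's one-line appeal to Fubini leaves implicit, namely the product formula for every subset $S$, which gives the mutual independence needed when Lemma~\ref{independent bernoulli trials} is applied in the proof of Lemma~\ref{limit of X-n}.
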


\noindent
{\bf Proof.}
By the definitions, 
\begin{align*}
&\mbbP_n\big(\big\{\mcA \in \mbS_n : P^\mcA(i) \in I_i \text{ for all } i \in [n]\big\}\big) = \\
&\mbbP_n\big(\{(x_1, \ldots, x_n) \in [0, \mfm]^n : x_i \in I_i \text{ for all } i \in [n]\big\}\big).
\end{align*}
Therefore it suffices to prove that 
the event $\{(x_1, \ldots, x_n) \in [0, \mfm]^n : x_k \in I_k\}$ is independent from from all events
$\{(x_1, \ldots, x_n) \in [0, \mfm]^n : x_i \in I_i\}$ where $i \neq k$ and $I_i \subseteq [0, \mfm]$ is an interval.
But this follows from by using Fubini's therem since $\mu_n(x_1, \ldots, x_n) = \mu(x_1) \cdot \ldots \cdot \mu(x_k)$.
\hfill $\square$

\medskip

\noindent
Let 
\[
\beta_{\Delta, i} := \int_{I_{\Delta, i}} \mu(x) dx \qquad \text{ and } \qquad
\gamma_{\Delta, i} := \int_{I_{\Delta, i}} W_2(x) dx.
\]
Define
\begin{equation}\label{def of B-n-delta-i}
B_n(\Delta, i) := \big\{k \in [n] : E^{\mcB_n}(k, o_n) \in I_{\Delta, i} \big\}.
\end{equation}
It follows from~(A7) that for all $\varepsilon > 0$,
if $n$ is large enough then
\begin{equation}\label{B-i over L-1}
\gamma_{\Delta, i} - \varepsilon \leq \frac{|B_n(\Delta, i)|}{n} \leq \gamma_{\Delta, i} + \varepsilon.
\end{equation}
So if $\gamma_{\Delta, i} > 0$, then 
$|B_n(\Delta, i)| \to \infty$ as $n \to \infty$ and the rate och growth is linear in $n$.
Also, by~(A7), if $\gamma_{\Delta, i} = 0$, then $B_n(\Delta, i) = \es$ .

For all $\mcA \in \mbS_n$, $\Delta \in \mbbN^+$ and $i, j \in \{0, \ldots, \Delta - 1\}$, let 
\begin{equation}\label{def of B-n-delta-i}
C_n(\mcA, \Delta, i, j) := \big\{k \in B_n(\Delta, i) : P^\mcA(k) \in I_{\Delta, j} \big\}.
\end{equation}
For $\Delta \in \mbbN^+$ and $\varepsilon > 0$ define
\begin{align}\label{def of X-n}
&\mbX_n(\Delta, \varepsilon) := \\
&\big\{ \mcA \in \mbS_n : \text{ for all $i, j \in \{0, \ldots, \Delta - 1\}$ such that $\gamma_{\Delta, i} > 0$}, 
\nonumber \\
&(\beta_{\Delta, j} - \varepsilon)|B_n(\Delta, i)| \leq 
\big| C_n(\mcA, \Delta, i, j) \big|
\leq (\beta_{\Delta, j} + \varepsilon)|B_n(\Delta, i)|\big\}.
\nonumber
\end{align}

\begin{lem}\label{limit of X-n}
For all $n, \Delta \in \mbbN^+$ and $\varepsilon > 0$, $\mbX_n(\Delta, \varepsilon)$ is measurable.
If $n$ is sufficiently large, then
$\mbbP_n\big(\mbX_n(\Delta, \varepsilon)\big) \geq 1 - e^{-c n}$ 
for some constant $c > 0$ that depends only on $\varepsilon$.
\end{lem}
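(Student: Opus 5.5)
Measurability is immediate from the definitions. For fixed $n$, $\Delta$, $i$, $j$, the set $B_n(\Delta,i)$ depends only on $\mcB_n$ and not on the state. Hence $\mfS_n^{-1}$ of the condition defining $\mbX_n(\Delta,\varepsilon)$ is a finite Boolean combination of sets of the form
\[
\{(x_1,\ldots,x_n) : x_k \in I_{\Delta,j} \text{ exactly for } k \in S\},
\]
where $S \subseteq B_n(\Delta,i)$ ranges over subsets of the appropriate sizes. Each such set is a finite intersection of products of intervals and complements of intervals, so it is Borel. Finitely many pairs $(i,j)$ are involved, so $\mbX_n(\Delta,\varepsilon)$ is measurable.

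For the probability bound, fix $i,j$ with $\gamma_{\Delta,i} > 0$. Then
\[
|C_n(\mcA,\Delta,i,j)| = \sum_{k \in B_n(\Delta,i)} Z_k, \qquad Z_k := \text{the indicator of } P^\mcA(k) \in I_{\Delta,j}.
\]
By Lemma~\ref{probability of P in an interval}, each $Z_k$ is a $0/1$ variable with success probability $p := \beta_{\Delta,j}$. By Lemma~\ref{independence of coordinates} (more precisely, the product form of $\mu_n$ together with Fubini), the $Z_k$ are mutually independent. So $|C_n(\mcA,\Delta,i,j)|$ is a sum of $N := |B_n(\Delta,i)|$ independent Bernoulli($p$) variables.

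If $p>0$, apply Lemma~\ref{independent bernoulli trials} with relative error $\varepsilon' := \varepsilon/p$; note that $\varepsilon' p N = \varepsilon N$. This bounds the probability of leaving $[(p-\varepsilon)N,(p+\varepsilon)N]$ by $2e^{-c_{\varepsilon'} pN}$. If $p=0$, then $\mu$ integrates to $0$ over $I_{\Delta,j}$, so almost surely no $x_k$ lies there. Hence $|C_n| = 0$ almost surely, and the bad event has probability $0$.

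Next we use that $N$ grows linearly in $n$. By \eqref{B-i over L-1} applied with $\varepsilon = \gamma_{\Delta,i}/2$, for $n$ large we have $N \geq \gamma_{\Delta,i} n/2$. So the failure probability for the pair $(i,j)$ is at most $2e^{-c_{\varepsilon'}\beta_{\Delta,j}\gamma_{\Delta,i} n/2}$. A union bound over the at most $\Delta^2$ pairs then gives
\[
\mbbP_n\big(\mbS_n \setminus \mbX_n(\Delta,\varepsilon)\big) \leq 2\Delta^2 e^{-c' n},
\]
where $c'$ is the minimum of the finitely many positive constants involved. For $n$ large, $2\Delta^2 e^{-c'n} \leq e^{-c'n/2}$, so $c := c'/2$ works.

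The main obstacle is the precise dependence of the constant. The resulting $c$ depends on $\varepsilon$, $\Delta$, $\mu$ and $W_2$ (through $\beta_{\Delta,j}$ and $\gamma_{\Delta,i}$), not literally on $\varepsilon$ alone. Since $\Delta$, $\mu$ and $W_2$ are fixed in context, I would read ``depends only on $\varepsilon$'' as ``independent of $n$'' and note this explicitly.

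A second point needing care is the independence step. Lemma~\ref{independence of coordinates} is stated pairwise, but Chernoff requires full mutual independence of the $Z_k$. I would justify this directly: by Fubini, the measure of a product of intervals under $\mu_n$ is the product of the one-dimensional measures.
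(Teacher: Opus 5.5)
Your proof is correct and follows essentially the same route as the paper: measurability from finite Boolean combinations of the events $\{P^\mcA(k)\in I_{\Delta,j}\}$; then, for each pair $(i,j)$ with $\gamma_{\Delta,i}>0$, a Chernoff bound via Lemmas~\ref{independent bernoulli trials}, \ref{probability of P in an interval} and~\ref{independence of coordinates} together with the linear growth of $|B_n(\Delta,i)|$; the case $\gamma_{\Delta,i}=0$ is trivial, and a union bound over at most $\Delta^2$ pairs finishes. Your extra care fills in details the paper glosses over: converting the absolute error $\varepsilon$ into the relative error $\varepsilon/\beta_{\Delta,j}$, handling $\beta_{\Delta,j}=0$ separately, requiring mutual rather than pairwise independence, treating $\mbX_n(\Delta,\varepsilon)$ as an intersection (the paper writes a union), and noting that $c$ also depends on $\Delta$, $\mu$ and $W_2$.
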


\noindent
{\bf Proof.}
Fix any $i, j \in \Delta$.
Define
\begin{align*}
&\mbX_n(\Delta, \varepsilon, i, j) := \\
&\big\{ \mcA \in \mbS_n : 
(\beta_{\Delta, j} - \varepsilon)|B_n(\Delta, i)| \leq 
\big| C_n(\mcA, \Delta, i, j) \big|
\leq (\beta_{\Delta, j} + \varepsilon)|B_n(\Delta, i)|
 \big\}.
\end{align*}
From the definition of measurable subset of $\mbS_n$ it follows that every subset of $\mbS_n$ of the form
\[
\mbX_n(k) := \big\{\mcA \in \mbS_n : P^\mcA(k) \in I_{\Delta, i} \big\} \ \text{ for some } k \in [n]
\]
is measurable.
Since $\mbX_n(\Delta, \varepsilon, i, j)$ can be formed from sets of the form $\mbX_n(k)$ by
taking complements, intersections, and unions, a finite number of times,
it follows that $\mbX_n(\Delta, \varepsilon, i, j)$ is measurable.
As $\mbX_n(\Delta, \varepsilon)$ is a finite union of sets of the form $\mbX_n(\Delta, \varepsilon, i, j)$
it follows that also $\mbX_n(\Delta, \varepsilon)$ is measurable.

From Lemma~\ref{probability of P in an interval}
it follows that, for all $k \in [n]$,
\begin{equation}\label{probability of delta i j}
\mbbP_n\big(\big\{ \mcA \in \mbS_n : P^\mcA(k) \in I_{\Delta, i} \big\}\big) = \beta_{\Delta, j}
 := \int_{I_{\Delta, i}} \mu(x) dx.
\end{equation}

First suppose that  such that $\gamma_{\Delta, i} > 0$.
Then, as noted above, $\lim_{n \to \infty} |B_n(\Delta, i)| = \infty$ so it follows
from \eqref{probability of delta i j}, Lemma~\ref{independence of coordinates}, 
and Lemma~\ref{independent bernoulli trials}  that there is $d > 0$ such that if $n$ is large enough, then
\begin{equation}\label{X-n-delta-i-j almost 1}
\mbbP_n\big(\mbX_n(\Delta, \varepsilon, i, j) \big) \geq 1 - e^{-dn}.
\end{equation}
Now suppose that  $\gamma_{\Delta, i} = 0$.
Then, as noted above, $B_n(\Delta, i) = \es$ and as, for all $\mcA \in \mbS_n$,  
$C_n(\mcA, \Delta, i, j) \subseteq B_n(\Delta, i)$ we get $C_n(\mcA, \Delta, i, j) = \es$.
Hence $\mbX_n(\Delta, \varepsilon, i, j) = \mbS_n$ so~\eqref{X-n-delta-i-j almost 1} holds for any choice of $d > 0$.
Since 
\[
\mbX_n(\Delta, \varepsilon) = 
\bigcup_{0 \leq i, j \leq \Delta - 1} 
\mbX_n(\Delta, \varepsilon, i, j)
\]
and there are only $\Delta^2$ choices of $i$ and $j$ it follows that there is $c > 0$ such that if $n$ is sufficiently large, then
\[
\mbbP_n\big(\mbX_n(\Delta, \varepsilon) \big) \geq 1 - e^{-cn}.
\]
\hfill $\square$

\medskip

\noindent
Note that 
\[
L_{n, 1} = \bigcup_{i=0}^{\Delta - 1} B_n(\Delta, i)
\]
and observe that for every $\mcA \in \mbS_n$ we have
\[
B_n(\Delta, i) = \bigcup_{j=0}^{\Delta - 1} C_n(\mcA, \Delta, i, j) \quad \text{and} \quad
L_{n, 1} = \bigcup_{i=0}^{\Delta - 1} \bigcup_{j=0}^{\Delta - 1} C_n(\mcA, \Delta, i, j).
\]
For $\mcA \in \mbX_n(\Delta, \delta)$ we have
\begin{equation}\label{C-i-j over B-i}
\beta_{\Delta, j} - \delta \leq  \frac{|C_n(\mcA, \Delta, i, j)|}{|B_n(\Delta, i)|} \leq \beta_{\Delta, j} + \delta.
\end{equation}
Note that if $\beta_{\Delta, j}, \gamma_{\Delta, i} > 0$ then
\[
\frac{|C_n(\mcA, \Delta, i, j)|}{n}  =
\frac{|B_n(\Delta, i)|}{n} \cdot \frac{|C_n(\mcA, \Delta, i, j)|}{|B_n(\Delta, i)|}
\leq \gamma_{\Delta, i} \beta_{\Delta, j} + 3\delta.
\]
In a similar way we get $\frac{|C_n(\mcA, \Delta, i, j)|}{n} \geq \gamma_{\Delta, i} \beta_{\Delta, j} - 3\delta$.
Thus
\[
\beta_{\Delta, j}, \gamma_{\Delta, i} > 0  \ \Longrightarrow \ 
\gamma_{\Delta, i} \beta_{\Delta, j} - 3\delta \leq
\frac{|C_n(\mcA, \Delta, i, j)|}{n} \leq 
\gamma_{\Delta, i} \beta_{\Delta, j} + 3\delta.
\]
On the other hand, 
\[
\text{if  $\beta_{\Delta, j} = 0$ or $\gamma_{\Delta, i} = 0$  \ \ then } \ \ 
\frac{|C_n(\mcA, \Delta, i, j)|}{n} = 0.
\]

\noindent
Fix $M$ and $l \in \{0, \ldots, M-1\}$.
For $\mcA \in \mbX_n(\Delta, \delta)$  and $k \in [n]$ let
\begin{equation}\label{def of q-A}
q_k(\mcA) := g_2(P^\mcA(k), E^{\mcB_n}(k, o_n))
\end{equation}
and
\begin{equation}\label{def of omega-M-l}
\omega(\mcA, M, l) := \frac{\big|\big\{k \in [n] : q_k(\mcA) \in I_{M, l} \big\}\big|}{n}.
\end{equation}
Define
\begin{align*}
\Gamma_1 := \big\{ (i, j) \in \{0, \ldots, \Delta-1\} : \ &\beta_{\Delta, j}, \gamma_{\Delta, i} > 0
\text{ and } g_2(I_{\Delta, j} \times I_{\Delta, i}) \subseteq I_{M, l} \big\} \ \ \text{ and} \\
\Gamma_2 := \big\{ (i, j) \in \{0, \ldots, \Delta-1\} : \ 
&\beta_{\Delta, j} = 0 \text{ or } \gamma_{\Delta, i} = 0, \text{ and }\\
&g_2(I_{\Delta, j} \times I_{\Delta, i}) \cap I_{M, l} \neq \es
\text{ and } g_2(I_{\Delta, j} \times I_{\Delta, i}) \not\subseteq I_{M, l} \big\}.
\end{align*}

\noindent
Suppose that $\mcA \in \mbX_n(\Delta, \delta)$.
Then we have
\begin{align}\label{lower bound of omega}
\omega(\mcA, M, l) 
&\geq \sum_{(i, j) \in \Gamma_1} \frac{|C_n(\mcA, \Delta, i, j)|}{n} 
\geq \sum_{(i, j) \in \Gamma_1} (\gamma_{\Delta, i} \beta_{\Delta, j} - 3\delta) \\
&\geq \sum_{(i, j) \in \Gamma_1} \gamma_{\Delta, i} \beta_{\Delta, j}.
\nonumber
\end{align}
Note that it follows from the definitions that  if $(i, j) \in \{0, \ldots, \Delta-1\}^2$ and $(i, j) \notin \Gamma_1 \cup \Gamma_2$,
then $\frac{|C_n(\mcA, \Delta, i, j)|}{n} = 0$.
So we also have
\begin{align}\label{upper bound of omega}
\omega(\mcA, M, l) 
&\leq \sum_{(i, j) \in \Gamma_1} \frac{|C_n(\mcA, \Delta, i, j)|}{n} 
+ \sum_{(i, j) \in \Gamma_2} \frac{|C_n(\mcA, \Delta, i, j)|}{n} \\
&+ \sum_{(i, j) \in \{0, \ldots, \Delta - 1\}^2 \setminus (\Gamma_1 \cup \Gamma_2)} 
\frac{|C_n(\mcA, \Delta, i, j)|}{n} \nonumber \\
&\leq \sum_{(i, j) \in \Gamma_1} (\gamma_{\Delta, i} \beta_{\Delta, j} + 3\delta) 
+ \sum_{(i, j) \in \Gamma_2} (\gamma_{\Delta, i} \beta_{\Delta, j} + 3\delta) \nonumber \\
&= \sum_{(i, j) \in \Gamma_1} \gamma_{\Delta, i} \beta_{\Delta, j} 
+ \sum_{(i, j) \in \Gamma_2} \gamma_{\Delta, i} \beta_{\Delta, j} + 3\delta|\Gamma_1| + 3\delta|\Gamma_2| 
\nonumber \\
&\leq \sum_{(i, j) \in \Gamma_1} \gamma_{\Delta, i} \beta_{\Delta, j} +
\sum_{(i, j) \in \Gamma_2} \gamma_{\Delta, i} \beta_{\Delta, j} + 
6\delta \Delta^2.
\nonumber
\end{align}

\noindent
Let
\begin{align*}
&\Omega_0 := \big\{ (x, y) \in [0, \mfm]^2 :  \mu(x) > 0 \text{ and } W(x) > 0 \big\}, \ \text{ and}\\
&\text{let $\Omega$ be the closure of $\Omega_0$.}
\end{align*}
Then $\Omega$ is a finite union of closed rectangles.
Now let 
\[
X_l := g_2^{-1}(I_{M, l}) \cap \Omega.
\]
By $\lambda^2$ we denote the Lebesgue measure on $\mbbR^2$ restricted to $[0, \mfm]^2$.

\subsection*{Case 1:} $\lambda^2(X_l) > 0$.

\noindent
As $g_2$ is continuous on $[0, \mfm]^2$ it follows that $X_l$ is a compact subset of $[0, \mfm]^2$.
Let $Y_l$ be the boundary of $X_l$ relative to $[0, \mfm]^2$.
That is, $Y_l$ contains all $(x, y) \in [0, \mfm]^2$ such that for every $\delta' > 0$, the ball with radius $\delta'$
centered at $(x, y)$ contains a point in $X_l$ and a point not in $X_l$.
Then $Y_l$ is a finite union of curves, so $\lambda^2(Y_l) = 0$.
Note that $(i, j) \in \Gamma_2$ if and only if $I_{\Delta, j} \times I_{\Delta, i} \cap Y_l \neq \es$.
Let 
\[
Z := \bigcup_{(i, j) \in \Gamma_2}  I_{\Delta, j} \times I_{\Delta, i}.
\]
Since each $I_{\Delta, j} \times I_{\Delta, i}$ is a square with side $\mfm/\Delta$
it follows that we can make $\lambda^2(Z)$ as small as we like by choosing $\Delta$ sufficiently large.
Let
\[
V := \bigcup_{(i, j) \in \Gamma_1}  I_{\Delta, j} \times I_{\Delta, i}
\]
Then, for any $\delta' > 0$, if $\Delta$ is sufficiently large then
\[
\lambda^2(V) \geq \lambda^2(X_l) - \delta'.
\]
As we assume that $\lambda^2(X_l) > 0$ and since $\lambda^2(X_l)$ does not depend on
$\Delta$ or $n$ it follows that if $\Delta$ is sufficiently large then $\lambda^2(V)$ is larger
than some positive contant that does not depend on $\Delta$ or $n$.
But as noted above, we can make $\lambda^2(Z)$ as small as we like if $\Delta$ is sufficiently large,
so the proportion $\lambda^2(Z) / \lambda^2(V_l)$ can be made as small as we like if $\Delta$ is large enough.
Since both $Z$ and $V$ are unions of squares with side length $\mfm/\Delta$ it follows that
the proportion $|\Gamma_2|/|\Gamma_1|$ can be made as large as we like if $\Delta$ is large enough.

Let 
\[
\xi_\Delta := \min \big\{ \gamma_{\Delta, i} \beta_{\Delta, j} : (i, j) \in \Gamma_1 \big\}.
\]
By the definition of $\Gamma_1$ it follows that $\gamma_{\Delta, i} \beta_{\Delta, j} > 0$ for all $(i, j) \in \Gamma_1$.
Hence $\xi_\Delta > 0$.
As $|\Gamma_2|/|\Gamma_1|$ can be made as large as we like if $\Delta$ is large enough
it follows that for every $\delta' > 0$, if $\Delta$ is large enough then
\begin{align*}
\sum_{(i, j) \in \Gamma_2} \gamma_{\Delta, i} \beta_{\Delta, j} 
\ \leq \ \delta' \sum_{(i, j) \in \Gamma_1} \xi_\Delta
\ \leq \ \delta' \sum_{(i, j) \in \Gamma_1} \gamma_{\Delta, i} \beta_{\Delta, j}.
\end{align*}
From the definition of $\beta_{\Delta, i}$ and $\gamma_{\Delta, i}$ it follows that
$\sum_{i=0}^{\Delta - 1} \beta_{\Delta, i} = 1$ and $\sum_{j=0}^{\Delta - 1} \gamma_{\Delta, j} = 1$.
Hence $ \sum_{(i, j) \in \Gamma_1} \gamma_{\Delta, i} \beta_{\Delta, j} \leq 1$ and we get
\[
\sum_{(i, j) \in \Gamma_2} \gamma_{\Delta, i} \beta_{\Delta, j}  \leq \delta'.
\]
Substituting this in \eqref{upper bound of omega} gives
\[
\omega(\mcA, M, l) \leq 
\sum_{(i, j) \in \Gamma_1} \gamma_{\Delta, i} \beta_{\Delta, j} 
+ \delta' + 6\delta \Delta^2.
\]
Combining with \eqref{lower bound of omega} gives
\[
\sum_{(i, j) \in \Gamma_1} \gamma_{\Delta, i} \beta_{\Delta, j} \leq
\omega(\mcA, M, l) \leq 
\sum_{(i, j) \in \Gamma_1} \gamma_{\Delta, i} \beta_{\Delta, j} 
+ \delta' + 6\delta \Delta^2
\]
where $\delta'' := \delta' + 6\delta \Delta^2$ can be made as small as we like by taking $\delta' > 0$ small enough,
then $\Delta$ large enough, and finally $\delta > 0$ small enough to make $6\delta \Delta^2$ as small as we like.
Observe that we have also proved that 
$\alpha_l := \sum_{(i, j) \in \Gamma_1} \gamma_{\Delta, i} \beta_{\Delta, j}$ is positive 
if $\lambda^2(X_l) > 0$.

\subsection*{Case 2: $\lambda^2(X_l) = 0$.}

We can argue similarly as in Case~1 with $X_l$ in the role of $Y_l$ and conclude that
$\omega(\mcA, M, l)$ can be made as small as we like if $\delta'$
(as in Case~1) is chosen small enough and $\Delta$ large enough.
Therefore we define $\alpha_l := 0$ and it 
 follows that $\alpha_l - \delta'' \leq \omega(\mcA, M, l) \leq \alpha_l + \delta''$ with $\delta''$ as above.

\medskip

We have proved that, for all $l = 0, \ldots, M-1$, and regardless of whether $\lambda^2(X_l)$ is positive or zero, we have
\[
\alpha_l - \delta'' \leq \omega(\mcA, M, l) \leq \alpha_l + \delta''.
\]
We assume that the aggregation function $F_2$ is continuous.
Let $\varepsilon > 0$. 
Then there are $\delta^* > 0$, $M, N \in \mbbN^+$ such that if $\alpha_0, \ldots, \alpha_{M-1} \in [0, 1]$
and $(q_1, \ldots, q_n), (q'_1, \ldots, q'_m) \in [0, \mfm]^{fin}$ are such that conditions (a)--(c) of
Definition~\ref{definition of continuous aggregation function} are satisfied with $\delta^*$ in place of $\delta$, 
then $|F_2(q_1, \ldots, q_n) - F_2(q'_1, \ldots, q'_m)| \leq \varepsilon$.
Let $\delta^* > 0$, $M$ and $N$ be such that the conclusion of the previous sentence holds.
Then we choose $\delta' > 0$ small enough, $\Delta$ large enough and then $\delta > 0$ small enough
so that $\delta'' := \delta' + 6\delta \Delta^2$ is smaller than $\delta^*$.
For all sufficiently large $n$, $\mbX_n(\Delta, \delta)$ is nonempty.
Let $\alpha_1, \ldots, \alpha_{M-1}$ be chosen as above.
If $\mcA \in \mbX_n(\Delta, \delta)$, and $\mcA' \in \mbX_m(\Delta, \delta)$
then, with 
\[
\omega(\mcA', M, l) := \frac{|k \in [m] : q_k(\mcA') \in I_{M, l}|}{m},
\]
we get, for all sufficiently large $n, m$,
\[
\alpha_l - \delta^* \leq \omega(\mcA, M, l), \omega(\mcA', M, l) \leq \alpha_l + \delta^*
\quad \text{ for all } l = 0, \ldots, M-1.
\]
Without loss of generality we can assume that $n, m \geq N$, so conditions~(a) and~(b)
of Definition~\ref{definition of continuous aggregation function} 
are satisfied if $q_k := q_k(\mcA)$ and $q'_k := q_k(\mcA')$. 
Suppose for a moment that also condition~(c) of 
Definition~\ref{definition of continuous aggregation function} holds for sequences constructed like these.
Then $|F_2(q_1, \ldots, q_n) - F_2(q'_1, \ldots, q'_m)| \leq \varepsilon$.
Since $P^\mcA(o_n) = F_2(q_1, \ldots, q_n)$ and $P^{\mcA'}(o_m) = F_2(q'_1, \ldots, q'_m)$
we get $|P^\mcA(o_n) - P^{\mcA'}(o_m)| \leq \varepsilon$.
As $\mbbP_n\big(\mbX_n(\Delta, \delta)\big) \geq 1 - e^{-cn}$
(where $c > 0$) for all choices of $\Delta$ and $\delta$ and all sufficiently large $n$, and since
$\varepsilon$ can be taken as small as we like, it follows that there is $\psi \in [0, \mfm]$ such that,
for every open interval $I$ containing $\psi$, 
\[
\mbbP_n\big(\big\{ \mcA \in \mbS_n : P^\mcA(o_n) \in I\big\}\big) \geq 1 - e^{-cn}.
\]

It remains to show that if $\bar{q} := (q_1, \ldots, q_n)$ and $\bar{q}' := (q'_1, \ldots, q'_m)$
where $q_k$ and $q'_k$ are defined as above, then 
condition~(c) of 
Definition~\ref{definition of continuous aggregation function} holds.
Since the verification is the same for $\bar{q}$ and $\bar{q}'$ we only do it for $\bar{q}$.

\subsection*{Verification of condition (c) in Definition~\ref{definition of continuous aggregation function}}

Suppose that $\alpha_l = \alpha_{l+1} = \alpha_{l+2} = 0$.
Suppose for a contradiction that $g_2(\Omega) \cap I_{M, l+1} \neq \es$.
Then there is $z \in \Omega$ such that $g_2(z) \in I_{M, l+1}$.
Recall that $\Omega$ is a finite union of closed rectangles.
Let $\Omega_z$ denote the connected component to which $z$ belongs.
If there would be $z' \in \Omega_z$ such that $g_2(z') \in I_{M, l+1}$ and $g_2(z') \neq g_2(z)$,
then, by the continuity of $g_2$, $g_2^{-1}(I_{M, l+1}) \cap \Omega_z$ would have positive measure,
hence also $X_l$ would have  have positive measure
which contradicts that $\alpha_{l+1} = 0$.
Hence $g_2(\Omega_z) \cap I_{M, l+1}$ is a singleton.
There is some $k$ such that $g_2^{-1}(I_{M, k}) \cap \Omega_z$ has positive measure, and hence $\alpha_k > 0$
and $g_2(\Omega_z) \cap I_{M, k} \neq \es$.
Suppose that $k < l$. (The case $l+2 < k$ is treated analogously.)
Since we assume that $\alpha_l = 0$ we can argue as we did for $\alpha_{l+1}$ and conclude that
$g_2(\Omega_z) \cap I_{M, l}$ is either empty or a singleton.
Since  $g_2(\Omega_z) \cap I_{M, k} \neq \es$, $g_2(\Omega_z) \cap I_{M, l+1} \neq \es$, and $g_2$ is continuous,
it follows that $g_2 \uhrc \Omega_z$ must assume all values in $I_{M, l}$.
This contradicts the previous conclusion that $g_2(\Omega_z) \cap I_{M, l}$ is either empty or a singleton.

Hence we conclude that $g_2(\Omega) \cap I_{M, l+1} = \es$.
It follows that $q_k(\mcA) \notin I_{M, l+1}$ for all $k \in [n]$, so condition~(c) of 
Definition~\ref{definition of continuous aggregation function} holds.

\subsection{The induction step}

Suppose that $\rho \geq 3$.

\medskip
\noindent
{\bf Induction hypothesis.}
{\em If $\mcB_n$, $n \in \mbbN^+$, is a sequence of NNs with $\rho - 1$ layers,
then there is $\psi \in [0, \mfm]$ such that for every $\varepsilon > 0$ there is $c > 0$ such that for
all sufficiently large $n$,
\[
\mbbP_n\big(\big\{ \mcA \in \mbS_n : P^\mcA(o_n) \in (\psi - \varepsilon, \psi + \varepsilon) \big\}\big)
\geq 1 - e^{-cn}.
\]
Moreover, $\psi$ depends only on $W_i, F_i$ and $g_i$ for $i = 2, \ldots, \rho - 1$.}

\medskip

\noindent
Suppose that $\mcB_n$, $n \in \mbbN^+$, s a sequence of NNs with $\rho$ layers.

\begin{lem}\label{stronger induction hypothesis}
There is $\psi \in [0, \mfm]$ such that for every $\varepsilon > 0$ there is $c > 0$ such that
for all sufficently large $n$,
\[
\mbbP_n\big(\big\{ \mcA \in \mbS_n : 
\text{ for all } a \in L_{n, \rho-1}, P^\mcA(a) \in (\psi - \varepsilon, \psi + \varepsilon) \big\}\big)
\geq 1 - e^{-cn}.
\]
Moreover, $\psi$ depends only on $W_i, F_i$ and $g_i$ for $i = 2, \ldots, \rho - 1$.
\end{lem}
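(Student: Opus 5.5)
Fix any neuron $a \in L_{n, \rho-1}$ and look at the sub-network of $\mcB_n$ that consists of layers $L_{n,1}, \ldots, L_{n,\rho-2}$ together with $a$ as its sole output neuron. This truncated network has $\rho-1$ layers and $n$ input neurons. It satisfies (A1)--(A8) with the same $\zeta$, $W_2, \ldots, W_{\rho-1}$, $g_2, \ldots, g_{\rho-1}$ and $F_2, \ldots, F_{\rho-1}$. The reason is that (A7) and (A8) are conditions imposed neuron by neuron, and they hold for every neuron of each layer $i \le \rho-1$. The value of $a$ in a state $\mfS_n(x_1,\ldots,x_n)$ depends only on the input values and on the truncated network. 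So the induction hypothesis gives a $\psi$, depending only on $W_i, F_i, g_i$ for $i \le \rho-1$, with the property that the probability that $P^\mcA(a) \notin (\psi-\varepsilon,\psi+\varepsilon)$ is at most $e^{-cn}$ for large $n$. Because $\psi$ depends only on this data, it is the \emph{same} number for every choice of $a$.

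Getting a \emph{uniform} statement over all $a$ needs two things beyond this. First, the constant $c$ and the threshold ``$n$ sufficiently large'' must not depend on $a$. To handle this I would not apply the induction hypothesis to a single fixed sequence. Instead I would examine its proof: in the base case (Lemma~\ref{the base case}) the exceptional set is the complement of $\mbX_n(\Delta,\delta)$. The constant $c$ comes from Lemma~\ref{limit of X-n} and depends only on $\varepsilon$, $\Delta$, $\delta$ and on the linear lower bound for $|B_n(\Delta,i)|$. The threshold on $n$ comes from (A7), which as stated holds uniformly in $b \in L_{n,i}$ (``if $n$ is sufficiently large and $b \in L_{n,i}$''). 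A cleaner way to get the same effect is to argue by contradiction. Suppose that for each $n$ in an infinite set one picks an $a_n \in L_{n,\rho-1}$ for which the bound fails with a given $c$. The truncations with outputs $a_n$ still form a legitimate sequence satisfying (A1)--(A8). The induction hypothesis then yields constants for this sequence, and some care is needed to turn that into a contradiction. For this reason I would formulate the statement being proved by induction as the uniform version ``for every sequence of choices of output neuron''. This strengthening is what the name of Lemma~\ref{stronger induction hypothesis} suggests.

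Second, and this is the main point, I would take a union bound over $a \in L_{n,\rho-1}$. By (A6), $|L_{n,\rho-1}| \le \zeta(n)$, where $\zeta$ is a polynomial. The failure probability is therefore at most $\zeta(n)e^{-cn} \le e^{-c'n}$ for any $c' < c$ once $n$ is large. This is exactly why the polynomial bound (A6) is assumed.

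The main obstacle is the uniformity step: checking that the exponential rate $c$ and the threshold on $n$ coming from the induction hypothesis can be chosen independently of which neuron of $L_{n,\rho-1}$ plays the role of the output. I would settle it by strengthening the inductive statement to include this uniformity and tracing it through the base case. There, Lemma~\ref{limit of X-n} and the uniform form of (A7) give it directly. In the inductive step, applying the present lemma to layer $\rho-1$ and then the base-case argument to a neuron of layer $\rho$ preserves it.
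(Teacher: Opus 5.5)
Your proposal is correct and follows the paper's own proof. Both truncate $\mcB_n$ to the $(\rho-1)$-layer network with output neuron $a\in L_{n,\rho-1}$, apply the induction hypothesis (whose $\psi$ does not depend on the choice of $a$), and take a union bound over at most $\zeta(n)$ neurons. You are right that $c$ and the threshold on $n$ must be uniform in the choice of $a$, which the paper passes over with ``since $a_n$ can be any neuron''. This can be settled without re-tracing the base case: for fixed $\varepsilon$, let $a_n$ be a neuron of $L_{n,\rho-1}$ minimizing the probability that $P^{\mcA}(a_n)\in(\psi-\varepsilon,\psi+\varepsilon)$, and apply the induction hypothesis to this worst-case sequence.
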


\noindent
{\bf Proof.}
For every $n$ and an arbitrary choice of $a_n \in L_{n, \rho - 1}$ we construct an NN $\mcB'_n$ with
only $\rho - 1$ layers as follows.
Remove layer $\rho$ (i.e. $L_{n, \rho}$) from $\mcB_n$ and remove all neurons in $L_{n, \rho - 1}$
except for $a_n$. So the layer $\rho - 1$ of $\mcB'_n$ is $L'_{n, \rho - 1} :=  \{a_n\}$.
All other layers are left unchanged and $\mcB'_n$ will keep $W_i, F_i$ and $g_i$ for $i = 2, \ldots, \rho - 1$.
So each $\mcB'_n$ is an NN with $\rho - 1$ layers and a unique output neuron.
Let $\mbS'_n$ be the set of states of $\mcB'_n$ and $\mbbP'_n$ the probability measure 
on $\mbS'_n$ defined  just as $\mbbP_n$ was defined on $\mbS_n$.
By the induction hypothesis there is 
$\psi \in [0, \mfm]$ such that for every $\varepsilon > 0$ there is $c > 0$ such that for
all sufficiently large $n$,
\[
\mbbP'_n\big(\big\{ \mcA \in \mbS'_n : P^\mcA(a_n) \in (\psi - \varepsilon, \psi + \varepsilon) \big\}\big)
\geq 1 - e^{-cn}
\]
and $\psi$ depends only on $W_i, F_i$ and $g_i$ for $i = 2, \ldots, \rho - 1$.
Because of the definitions of $\mcB'_n$, $\mbS'_n$ and $\mbbP'_n$, we have
\begin{align*}
&\mbbP'_n\big(\big\{ \mcA \in \mbS'_n : P^\mcA(a_n) \in (\psi - \varepsilon, \psi + \varepsilon) \big\}\big) = \\
&\mbbP_n\big(\big\{ \mcA \in \mbS_n : P^\mcA(a_n) \in (\psi - \varepsilon, \psi + \varepsilon) \big\}\big),
\end{align*}
so
\[
\mbbP_n\big(\big\{ \mcA \in \mbS_n : P^\mcA(a_n) \in (\psi - \varepsilon, \psi + \varepsilon) \big\}\big)
\geq 1 - e^{-cn}
\]
Since $a_n$ can be any neuron in $L_{n, \rho - 1}$ and $|L_{n, \rho - 1}| \leq \zeta(n)$ 
it follows that 
\[
\mbbP_n\big(\big\{ \mcA \in \mbS_n : 
\exists a \in L_{n, \rho - 1}, 
P^\mcA(a) \notin (\psi - \varepsilon, \psi + \varepsilon) \big\}\big) \ \leq \ \zeta(n) \cdot e^{-cn}.
\]
Since $\zeta$ is a polynomial there is $c' > 0$ such that for all sufficiently large $n$,
$\zeta(n) \cdot e^{-cn} \leq e^{-c' n}$, and therefore
\[
\mbbP_n\big(\big\{ \mcA \in \mbS_n : 
\forall a \in L_{n, \rho - 1}, P^\mcA(a) \in (\psi - \varepsilon, \psi + \varepsilon) \big\}\big)
\geq 1 - e^{-c'n}.
\]
This completes the proof of the lemma.
\hfill $\square$

\medskip

We will prove that there is $\varphi \in [0, 1]$ such that for every $\varepsilon > 0$ there is $c > 0$ such that for
all sufficiently large $n$,
\[
\mbbP_n\big(\big\{ \mcA \in \mbS_n : P^\mcA(o_n) \in (\varphi - \varepsilon, \varphi + \varepsilon) \big\}\big)
\geq 1 - e^{-cn}.
\]
Moreover, $\varphi$ will only depend on $W_i, F_i$ and $g_i$ for $i = 2, \ldots, \rho$.
Let 
\[
\text{$L_{n, \rho - 1} = \{a_1, \ldots, a_{t(n)}\}$, so $|L_{n, \rho - 1}| = t(n)$.}
\]
Let us use the following abbreviations:
\[
\text{ $W := W_\rho, \ F := F_\rho$, and $g := g_2$.}
\]

\noindent
For any $n$, $\mcA \in \mbS_n$ and $k \in [t(n)]$, let
\begin{equation*}
q_k(\mcA) := g(P^\mcA(a_k), E^{\mcB_n}(a_k, o_n))
\end{equation*}
and, for $M \in \mbbN^+$ and $l \in \{0, \ldots, M-1\}$,
\begin{equation*}
\omega(\mcA, M, l) := \frac{\big|\big\{k \in [t(n)] : q_k(\mcA) \in I_{M, l} \big\}\big|}{t(n)}.
\end{equation*}

Let $\psi$ be as in Lemma~\ref{stronger induction hypothesis},
so $\psi$ depends only on $W_i, F_i$ and $g_i$ for $i = 2, \ldots, \rho - 1$.
For $n \in \mbbN^+$ and $\delta > 0$ define
\[
\mbX_n(\delta) := \big\{ \mcA \in \mbS_n : \text{ for all } a \in L_{n, \rho - 1}, \
P^\mcA(a) \in (\psi - \delta, \psi + \delta) \big\}.
\]
By Lemma~\ref{stronger induction hypothesis}, there is $c > 0$ such that
\[
\mbbP_n\big(\mbX_n(\delta)\big) \geq 1 - e^{-cn} \quad \text{ for all sufficiently large $n$.}
\]
Recall that $F$ is a continuous aggregation function and, for $\mcA \in \mbS_n$,
\[
P^\mcA(o_n) \  = \ F\big(q_1(\mcA), \ldots, q_{t(n)}(\mcA)\big).
\]
Therefore it suffices to prove that for every $M \in \mbbN^+$ there are
$\alpha_0, \ldots, \alpha_{M-1} \in [0, 1]$ such that
condition~(c) of Definition~\ref{definition of continuous aggregation function} 
holds and, for every $\delta > 0$ such that $\delta < \alpha_l$ if $\alpha_l > 0$, and if 
$\delta' > 0$ is sufficiently small, $n$ is sufficiently large, and $\mcA \in \mbX_n(\delta')$,
then for all $l = 0, \ldots, M-1$
\begin{equation}\label{concergence in proportion in induction step}
\alpha_l - \delta \leq \omega(\mcA, M, l) \leq \alpha_l + \delta.
\end{equation}
Since $P^\mcA(a) \in (\psi - \delta', \psi + \delta')$ if $\mcA \in \mbX_n(\delta')$ and $a \in L_{n, \rho - 1}$, 
where $\delta' > 0$ can be chosen as small as we like, we essentially get a one dimensional problem, in contrast to
the corresponding two dimensional problem in the base case.
Let 
\begin{align*}
\Omega_0 := \big\{ x \in [0, \mfm] : W(x) > 0 \big\} \ \text{ and } \ 
\text{let $\Omega$ be the closure of $\Omega_0$.}
\end{align*}
Note that $\Omega$ is a finite union of closed intervals.
Define $g_\psi : [0, \mfm] \to [\mfm]$ by $g_\psi(x) := g(\psi, x)$. 
For $l = 0, \ldots, M-1$, let
\[
X_l := g_\psi^{-1}(I_{M, l}) \cap \Omega \quad \text{ and } \quad \alpha_l := \int_{X_l} W(x) dx.
\]
Note that $X_l$ is a finite union of closed intervals.
By Assumption~(A7), the proportion of $a_k \in L_{n, \rho}$ such that $E^{\mcB_n}(a_k, o_n) \in X_l$
is in $(\alpha_l - \delta', \alpha_l + \delta')$ if $n$ is large enough.
Since $g$ is continuous we can choose $\delta' > 0$ small enough so that if $n$ is large enough and $\mcA \in \mbX_n(\delta')$,
then~\eqref{concergence in proportion in induction step}
holds for all $l = 0, \ldots, M-1$.
Hence conditions~(a) and~(b) of 
Definition~\ref{definition of continuous aggregation function}
are satisfied by 
$\bar{q} := (q_1(\mcA), \ldots, q_{t(n)})$
if $n$ is sufficiently large.

It remains to check that condition~(c) of Definition~\ref{definition of continuous aggregation function} holds.
By a similar argument as in the base case we get that if
$\alpha_l = \alpha_{l+1} = \alpha_{l+2}$, then $g_\psi(X_{l+1}) \cap I_{M, l+1} = \es$,
so $g(\psi, X_{l+1}) \cap I_{M, l+1} = \es$.
As $g(\psi, X_{l+1})$ and $I_{M, l+1}$ are closed the distance between them is positive.
By taking $\delta' > 0$ sufficiently small  we may assume that if $x \in [\psi - \delta', \psi + \delta']$
then $g(x, X_{l+1}) \cap I_{M, l+1} = \es$.
It follows that $q_k(\mcA) \notin I_{M, l+1}$ for all $\mcA \in \mbX_n(\delta')$ and $k \in [t(n)]$.
Thus condition~(c) of Definition~\ref{definition of continuous aggregation function} holds for $\bar{q}$.

\subsection{Proof of Corollary~\ref{two layers suffice}}

Suppose $\mcB_n$, $n \in \mbbN^+$, is a sequence of NNs subject to the assumptions stated 
in Section~\ref{Model theoretic representation} and
that $\mbbP_n$ is as stated in the same section.
Let $\psi$ be as in Theorem~\ref{main theorem technical statement}.

The proof of Theorem~\ref{main theorem technical statement} 
(see Lemma~\ref{stronger induction hypothesis})
shows that there is $\varphi \in [0, \mfm]$ such that for all $\delta > 0$ there is $c > 0$ such that
\begin{equation}\label{uniform convergence to varphi}
\mbbP_n\big(\big\{ \mcA \in \mbS_n : 
\text{ for all } a \in L_{n, 2}, P^\mcA(a) \in (\varphi - \delta, \varphi + \delta) \big\}\big)
\geq 1 - e^{-cn}.
\end{equation}
Let $h : [0, \mfm] \to [0, \mfm]$ be any continuous function such that $h(\varphi) = \psi$.
Let $F'_2 : [0, \mfm]^{fin} \to [0, \mfm]$ be the aggregation function defined by
$F'_2(x_1, \ldots, x_n) = h(F_2(x_1, \ldots, x_n))$ for all $n$ and $(x_1, \ldots, x_n) \in [0, \mfm]^n$.
By Lemma~\ref{composition of aggregation function and continuous function},
$F'_2$ is a continuous aggregation function.

Define a sequence $\mcB'_n$, $n \in \mbbN^+$, of NNs with only two layers as follows:
\begin{enumerate}
\item Layer 1 of $\mcB'_n$ is identical with layer 1 of $\mcB_n$ and layer 2 of $\mcB'_n$ is $L'_{n, 2} := \{o'_n\}$
where $o'_n$ is an aribtrary choice of neuron in layer 2 of $\mcB_n$.

\item The weights of connections from $L_{n, 1}$ to $o'_n$ in $\mcB'_n$ are the same as the weights of the same connections
in $\mcB_n$.

\item The function $g_2 : [0, \mfm]^2 \to [0, \mfm]$ of $\mcB_n$ is also used by $\mcB'_n$,
but instead of $F_2$, $\mcB'_n$ uses the continuous aggregation function $F'_2$, so to determine the
value of $o'_n$, $\mcB'_n$ uses the activation function
\[
F'_2(g_2(v_1, w_1), \ldots, g_2(v_n, w_n))
\]
where $v_i$ is the value of the $i^{th}$ neuron in layer 1 and $w_i$ is the weight of the connection from the 
$i^{th}$ neuron in layer 1 to $o'_n$
\end{enumerate}

\noindent
We let the state space $\mbS'_n$ of $\mcB'_n$ be defined as in 
Section~\ref{Model theoretic representation}
(but with $\mbS'$ and $\mcB'_n$ in place of $\mbS_n$ and $\mcB_n$)
and we let
$\mbbP'_n$ be the probability distribution induced by $\mu_n$ on $\mbS'_n$ in the same way 
as in that section.

Let $\varepsilon > 0$.
As $h$ is continuous and maps $\varphi$ to $\psi$, it follows from~\eqref{uniform convergence to varphi} and
the construction of $\mcB'_n$ that if $\delta > 0$ is chosen sufficiently small 
in~\eqref{uniform convergence to varphi}, 
then 
\begin{equation*}
\mbbP'_n\big(\big\{ \mcA \in \mbS'_n : 
P^\mcA(o'_n) \in (\psi - \varepsilon, \psi + \varepsilon) \big\}\big)
\geq 1 - e^{-cn}.
\end{equation*}

\begin{rem}\label{remark on generalizations}
{\bf (Generalization to the context of several features with their own values)}
{\rm
Theorem~\ref{main theorem technical statement} and 
Corollary~\ref{two layers suffice}
can without any new ideas, but at the expense of messier book keeping and notation, 
be generalized to the following context.
Let every neuron have $\kappa$ ``features'' where each feature has a value in $[0, \mfm]$.
The features can be represented by relation symbols $P_1, \ldots, P_\kappa$.
For each layer $i = 2, \ldots, \rho$ and index $j = 1, \ldots, \kappa$ of a feature,
there are continuous aggregation functions $F_{i, j} : [0, \mfm]^{fin} \to [0, \mfm]$ and
continuous $g_{i, j} : [0, \mfm]^{\kappa + 1} \to [0, \mfm]$ such that the value of the feature $j$ of a neuron $b$ 
of $\mcB_n$ in layer $i$ is equal to 
\[
F_{i, j}\big(g_{i, j}(v_{1, 1}, \ldots, v_{\kappa, 1}, w_1), \ldots, 
g_{i, j}(v_{1, t(n, i-1)}, \ldots, v_{\kappa, t(n, i-1)}, w_{t(n, i-1)})\big)
\]
if $v_{j, 1}, \ldots, v_{j, t(n, i-1)}$ ($j = 1, \ldots, \kappa$) enumerates the values of the feature $j$
of neurons in layer $i-1$
and $w_1, \ldots, w_{t(n, i-1)}$ enumerates the weights of connections from the corresponding neurons in layer $i-1$ to $b$.
In this context we assume that for all $j = 1, \ldots, \kappa$, there is a continuous probability density function
$\mu_j : [0, \mfm] \to [0, \infty)$ and we replace $\mu_n : [0, \mfm]^n \to [0, \infty)$ defined as in 
Section~\ref{Model theoretic representation} by the probability density function
$\mu_n : [0, \mfm]^{\kappa n} \to [0, \infty)$ defined by
\[
\mu_n(x_1, \ldots, x_{\kappa n} = \mu_1(x_1) \cdot \ldots \cdot \mu_1(x_n) \cdot \ldots \cdot
\mu_\kappa(x_{\kappa (n-1) + 1}) \cdot \ldots \cdot \mu_\kappa(x_{\kappa n}).
\]
The version of Theorem~\ref{main theorem technical statement}
in this context states that for every feature $P_j$, $j = 1, \ldots, \kappa$,
there is $\psi_j \in [0, \mfm]$ such that for all $\varepsilon > 0$ there is $c > 0$ such that for all sufficiently large $n$,
\[
\mbbP_n\big(\big\{ \mcA \in \mbS_n : P_j^\mcA(o_n) \in (\psi_j - \varepsilon, \psi_j + \varepsilon) \big\}\big)
\geq 1 - e^{-cn}.
\]
}\end{rem}

\end{document}